\newtheorem{assumption}{Assumption}
\newtheorem{theorem}{Theorem}
\newtheorem{lemma}{Lemma}
\newtheorem{proposition}{Proposition}
\title{Gradient Q$(\sigma, \lambda)$: A Unified Algorithm with Function Approximation\\ for Reinforcement Learning }
\begin{document}
\author{Long Yang, Yu Zhang, Qian Zheng, Pengfei Li, Gang Pan
 \\
  Department of Computer Science,
  Zhejiang University\\
  \texttt{\{yanglong,hzzhangyu,qianzheng,pfl,gpan\}@zju.edu.cn} 
}

\maketitle

\begin{abstract}
Full-sampling (e.g., Q-learning) and pure-expectation (e.g., Expected Sarsa) algorithms are efficient and frequently used techniques in reinforcement learning.
Q$(\sigma,\lambda)$ is the first approach unifies them with eligibility trace through the sampling degree $\sigma$.
However, it is limited to the tabular case, for large-scale learning, the Q$(\sigma,\lambda)$ is too expensive to require a huge volume of tables to accurately storage value functions.
To address above problem,  we propose a GQ$(\sigma,\lambda)$
that extends tabular Q$(\sigma,\lambda)$ with linear function approximation.
We prove the convergence of GQ$(\sigma,\lambda)$.
Empirical results on some standard domains show that GQ$(\sigma,\lambda)$ with a combination of full-sampling with pure-expectation reach a better performance than full-sampling and pure-expectation methods.
\end{abstract}

\section{Introduction}

Reinforcement learning (RL) is a powerful tool for sequential decision-making problem. 
In RL, the agent's goal is to learn from experiences and to seek an optimal policy from the delayed reward decision system. 
Tabular learning methods are core ideas in RL algorithms with simple forms: \emph{value functions} are represented as arrays, or tables \cite{sutton1998reinforcement}. 
One merit of tabular learning methods is that such methods converge to the optimal solution with solid theoretical guarantee \cite{singh2000convergence}.
However, when state space is enormous, we suffer from what Bellman called ``curse of dimensionality" \cite{ernest1957dynamic}, and can not expect to obtain value function accurately by tabular learning methods. 
An efficient approach to address the above problem is to use a parameterized function to approximate the value function \cite{sutton1998reinforcement}.

Recently, Yang et al.~\shortcite{yang2018} propose a new algorithm Q$(\sigma,\lambda)$ that develops Q$(\sigma)$ \cite{sutton2018reinforcement,de2018multi} with eligibility trace.
Q$(\sigma,\lambda)$ unifies Sarsa$(\lambda)$ \cite{rummery1994line} and Q$^{\pi}(\lambda)$ \cite{H2016}.
However, original theoretical results by Yang et al. \shortcite{yang2018} are limited in tabular learning.
In this paper, we extend tabular Q$(\sigma,\lambda)$ algorithm with linear function approximation and propose the gradient Q$(\sigma,\lambda)$ (GQ$(\sigma,\lambda)$) algorithm.
The proposed GQ$(\sigma,\lambda)$ algorithm unifies \emph{full-sampling} ($\sigma=1$) and \emph{pure-expectation} ($\sigma=0$) algorithm through the \emph{sampling degree} $\sigma$.
Results show that GQ$(\sigma,\lambda)$ with a varying combination between full-sampling with pure-expectation achieves a better performance than both full-sampling and pure-expectation methods.

Unfortunately, it is not sound to expend Q$(\sigma,\lambda)$ by semi-gradient method via \emph{mean square value error} (MSVE) objective function directly,
although the linear, semi-gradient method is arguably the simplest and best-understood kind of function approximation. 
In this paper, we provide a profound analysis of the instability of Q$(\sigma,\lambda)$ with function approximation by the semi-gradient method.

Furthermore, to address above instability, we propose GQ$(\sigma,\lambda)$ algorithm under the framework of \emph{mean squared projected Bellman error} (MSPBE) \cite{sutton2009fast_a}.
However, as pointed out by Liu et al.~\shortcite{liu2015finite}, we can not get an unbiased estimate of the gradient with respect to the MSPBE objective function. 
In fact, since the update law of gradient involves the product of expectations, 
the unbiased estimation cannot be obtained via a single sample, which is the double-sampling problem. 
Secondly, the gradient of MSPBE objective function has a term likes $\mathbb{E}[\phi_{t} \phi_{t}^\top]^{-1}$, cannot also be estimated via a single sample, which is the second bottleneck of applying stochastic gradient method to optimize MSPBE objective function. 
Inspired by the key step of the derivation of TDC algorithm \cite{sutton2009fast_a}, 
we apply the two-timescale stochastic approximation \cite{borkar2000ode} to address the dilemma of double-sampling problem, 
and propose a convergent GQ$(\sigma,\lambda)$ algorithm which unifies full-sampling and pure-expectation algorithm with function approximation.

Finally, we conduct extensive experiments on some standard domains to show that GQ$(\sigma,\lambda)$ with an value $\sigma\in(0,1)$ that results in a mixture of the full-sampling with pure-expectation methods, performs better than either extreme $\sigma=0$ or $\sigma=1$.

\section{Background and Notations}

The standard reinforcement learning framework~\cite{sutton1998reinforcement} is often formalized as \emph{Markov decision processes} (MDP)~\cite{puterman2014markov}. 
It considers 5-tuples form $\mathcal{M}=(\mathcal{S},\mathcal{A},\mathcal{P},\mathcal{R},\gamma)$, 
where $\mathcal{S}$ indicates the set of all states, $\mathcal{A}$ indicates the set of all actions.
At each time $t$, the agent in a state $S_{t}\in\mathcal{S}$ and it takes an action $A_{t}\in\mathcal{A}$, 
then environment produces a reward $R_{t+1}$ to the agent.
$\mathcal{P} : \mathcal{S}\times\mathcal{A}\times\mathcal{S}\rightarrow[0,1]$,
$P_{s s^{'}}^a=\mathcal{P}(S_{t}=s^{'}|S_{t-1}=s,A_{t-1}=a)$ is the conditional probability for the state transitioning from $s$ to $s^{'}$ under taking the action $a$.
$\mathcal{R} : \mathcal{S}\times\mathcal{A}\rightarrow\mathbb{R}^{1}$: $\mathcal{R}_{s}^{a}=\mathbb{E}[R_{t+1}|S_{t}=a,A_{t}=a]$. 
The discounted factor $\gamma\in(0,1)$.

A \emph{policy} is a probability distribution defined on $\mathcal{S}\times\mathcal{A}$, 
\emph{target policy} is the policy that will be learned,
and \emph{behavior policy} is used to generate behavior.
If $\mu=\pi$, the algorithm is called \emph{on-policy} learning, otherwise it is \emph{off-policy} learning.
We assume that Markov chain induced by behavior policy $\mu$ is ergodic, 
then there exists a stationary distribution $\xi$ such that 
$\forall S_{0}\in\mathcal{S}$
\[\frac{1}{n}\sum_{k=1}^{n} P(S_{k}= s |S_{0})\rightarrow \xi(s),~\text{as}~ n\rightarrow\infty.\] 
We denote $\Xi = \text{diag}\{\xi(s_{1}),\xi(s_{2}),\cdots,\xi(s_{|\mathcal{S}|)}\}$ 
as a diagonal matrix and its diagonal element is the stationary distribution of state.

For a given policy $\pi$, one of many key steps in RL is to estimate the \emph{state-action value function} 
\[
q^{\pi}(s,a) = \mathbb{E}_{\pi}[G_{t}|S_{t} = s,A_{t}=a],
\]
where $G_{t}=\sum_{k=0}^{\infty}\gamma^{k}R_{k+t+1}$ and $\mathbb{E}_{\pi}[\cdot|\cdot]$ stands for the expectation of a random variable with respect to the probability distribution induced by $\pi$.
It is known that $q^{\pi}(s,a)$ is the unique fixed point of \emph{Bellman operator} $\mathcal{B}^{\pi}$,
\begin{flalign}
\label{bellman-equation}
\mathcal{B}^{\pi} q^{\pi}=q^{\pi}~~\text{(Bellman equation)},
\end{flalign}
where Bellman operator $\mathcal{B}^{\pi}$ is defined as:
\begin{flalign}
\label{Eq:bellman_operator}
\mathcal{B}^{\pi} q&=\mathcal{R}^{\pi}+\gamma {P}^{\pi}q,
\end{flalign}
 $P^{\pi}$$\in\mathbb{R}^{|\mathcal{S}| \times |\mathcal{S}|}$ and $R$$\in\mathbb{R}^{|\mathcal{S}|\times|\mathcal{A}|}$ with the corresponding elements:
$
 P^{\pi}_{ss^{'}}= \sum_{a \in \mathcal{A}}\pi(a|s)P^{a}_{ss^{'}},R(s,a)=\mathcal{R}_{s}^{a}.
$

\subsection{Temporal Difference Learning and $\lambda$-Return}

We can not calculate $q^{\pi}$ from Bellman equation (\ref{bellman-equation}) directly for the model-free RL problem (in such problem, the agent can not get $\mathcal{P}$ or $\mathcal{R}$ for a given MDP). 
In RL, temporal difference (TD) learning \cite{sutton1988learning} is one of the most important methods to solve the model-free RL problem.
$\lambda$-Return is a multi-step TD learning that needs a longer sequence of experienced rewards is to learning the value function. 

\textbf{TD learning} One-step TD learning estimates the value function by taking action according to behavior policy, sampling the reward, and bootstrapping via the current estimation of the value function. 
Sarsa~\cite{rummery1994line}, Q-learning~\cite{watkins1989learning} and Expected-Sarsa~\cite{van2009theoretical} are typical one-step TD learning algorithms.

From the view of sampling degree, TD learning methods fall into two categories: \emph{full-sampling} and \emph{pure-expectation}, which is deeply discussed in section 7.5\&7.6 in \cite{sutton2018reinforcement} or \cite{de2018multi}.

Sarsa and Q-learning are typical full-sampling algorithms which have sampled all transitions to learn value function.
Instead,
pure-expectation algorithms take into account how likely each action is 
under current policy,
e.g. \emph{Tree-Backup}~\cite{precup2000eligibility} or Expected-Sarsa uses the expectation of state-action value to estimate value function.
$\text{Q}^{\pi}(\lambda)$~\cite{H2016} algorithm is also a pure-expectation algorithm which combines TD learning with eligibility trace.
Harutyunyan et al.\shortcite{H2016} prove that when behavior and target policies are sufficiently close, off-policy $\text{Q}^{\pi}(\lambda)$ algorithm converges both in policy evaluation and control task.

\textbf{$\lambda$-Return} 
For a trajectory, the $\lambda$-return is an average contains all the $n$-step returns by weighting proportionally to $\lambda^{n-1}$, $\lambda\in[0,1]$. 
Since the mechanisms of all the $\lambda$-return algorithms are similar, we only present the definition of $\lambda$-return of Sarsa$(\lambda)$~\cite{sutton2018reinforcement} as follows,
\begin{flalign}
\label{l_return_sarsa_onpolicy}
G_{t}^{\lambda,\text{S}}=(1-\lambda)\sum_{n=0}^{\infty}\lambda^{n}G_{t}^{t+n},
\end{flalign}
where $G_{t}^{t+n}=\sum_{i=0}^{n}\gamma^{i}R_{t+i+1}+\gamma^{n+1}Q(S_{t+n},A_{t+n})$ is $n$-step return of Sarsa from time $t$.
After some simple algebra, the $\lambda$-return can be rewritten as a sum of TD-error,
\begin{flalign}
\label{lam_oprator}
G_{t}^{\lambda,\text{S}}
=Q(S_0,A_0)+\sum_{t=0}^{\infty}(\lambda\gamma)^{t}\delta_{t}^{\text{S}},
\end{flalign}
where $\delta_{t}^{\text{S}}=R_{t+1}+\gamma Q_{t+1} - Q_{t}$, and $Q_{t}\overset{\text{def}}=Q(S_t,A_t)$.

\subsection{An Unified View}

In this section, we introduce the existing method that unifies full-sampling and pure-expectation algorithms.

\textbf{$\text{Q}(\sigma)$ Algorithm}
Recently, Sutton and Barto\shortcite{sutton2018reinforcement} and De Asis et al.\shortcite{de2018multi} propose a new TD algorithm $\text{Q}(\sigma)$ unifies Sarsa and Expected Sarsa 
\footnote{For multi-step case, $\text{Q}(\sigma)$ unifies \emph{$n$-step Sarsa} and \emph{$n$-step Tree-Backup}~\cite{precup2000eligibility}. For more details, please refer to \cite{de2018multi}.}. 
$\text{Q}(\sigma)$ estimates value function by weighting the average between Sarsa and Expected-Sarsa through a \emph{sampling parameter} $\sigma$. 
For a transition ($S_{t},A_{t},R_{t+1},S_{t+1},A_{t+1}$),
$\text{Q}(\sigma)$ updates value function as follows, 
\begin{flalign}
\nonumber
Q(S_{t},A_{t})&=Q(S_{t},A_{t}) + \alpha_{t}\delta_{t}^{\sigma},\\
\label{eq:delta_sigma}
\delta_{t}^{\pi,\sigma}&=\sigma\delta_{t}^{\text{S}}+(1-\sigma)\delta_{t}^{\text{ES}},
\end{flalign}
$\delta_{t}^{\text{ES}}=R_{t+1}+\mathbb{E}_{\pi}[Q(S_{t+1},\cdot)]- Q_{t}$,
$\mathbb{E}_{\pi}[Q(S_{t+1},\cdot)]=\sum_{a\in\mathcal{A}}\pi(a|S_{t+1})Q(S_{t+1},a)$.

Q$(\sigma)|_{\sigma=0}$ is reduced to Expected-Sarsa, while Q$(\sigma)|_{\sigma=1}$ is Sarsa exactly.
Experiments by De Asis et al.\shortcite{de2018multi} show that for an intermediate value of $\sigma\in(0,1)$, which results in a mixture of the existing algorithms, performs better than either extreme $\sigma=0$ or $\sigma=1$.

\textbf{Q($\sigma,\lambda$) Algorithm} Later, Yang et al.\shortcite{yang2018} extend Q($\sigma$) with eligibility trace, and they propose Q($\sigma,\lambda$)
unifies Sarsa($\lambda$) and $\text{Q}^{\pi}(\lambda)$.
$\text{Q}(\sigma,\lambda)$ updates value function as:
\begin{flalign}
    e(s,a)&=\gamma\lambda e(s,a)+\mathbb{I}\{(S_{t},A_{t})=(s,a)\},\\
    Q_{t+1}(s,a)&= Q_{t+1}(s,a) + \alpha_{t}\delta_{t}^{\pi,\sigma}e(s,a),
\end{flalign}
where $\mathbb{I}$ is indicator function, $\alpha_{t}$ is step-size.

We notice that $\text{Q}(\sigma,\lambda)|_{\sigma=0}$ is reduced to $\text{Q}^{\pi}(\lambda)$~\cite{H2016}, Q$(\sigma,\lambda)|_{\sigma=1}$ is Sarsa($\lambda$) exactly.
The experiments in \cite{yang2018} shows a similar conclusion as De Asis et al.\shortcite{de2018multi}:
an intermediate value of $\sigma\in(0,1)$ achieve the best performance than extreme $\sigma=0$ or $\sigma=1$.
Besides, Yang et al.\shortcite{yang2018} have showed that for a trajectory $\{(S_{t},A_{t},R_{t})\}_{t\ge0}$, by Q($\sigma,\lambda$), the total update of a given episode reaches
\begin{flalign}
\label{mixedoperator}
Q(S_{0},A_{0})+\sum_{t=0}^{\infty}(\lambda\gamma)^{t}\delta^{\pi,\sigma}_{t},
\end{flalign}
which is an off-line version of Q($\sigma$) with eligibility trace. If $\sigma=1$, Eq.(\ref{mixedoperator}) is Eq.(\ref{lam_oprator}) exactly.  

Finally, we introduce the \emph{mixed-sampling operator} $\mathcal{B}^{\pi,\mu}_{\sigma,\lambda}$~\cite{yang2018}, which is a high level view of (\ref{mixedoperator}),
\begin{flalign}
\nonumber
\mathcal{B}^{\pi,\mu}_{\sigma,\lambda}: \mathbb{R}^{|\mathcal{S}|\times|\mathcal{A}|}&\rightarrow\mathbb{R}^{|\mathcal{S}|\times|\mathcal{A}|}\\
\label{def:mixedoperator}
q&\mapsto q+\mathbb{E}_{\mu}[\sum_{t=0}^{\infty}(\lambda\gamma)^{t}\delta^{\pi,\sigma}_{t}].
\end{flalign}

\section{Q$(\sigma,\lambda)$ with Semi-Gradient Method}

In this section, we analyze the instability of extending tabular Q$(\sigma,\lambda)$ with linear function approximation by the semi-gradient method.
We need some necessary notations about linear function approximation.

When the dimension of $\mathcal{S}$ is huge, we cannot expect to obtain value function accurately by tabular learning methods.  
We often use a linear function with a parameter ${\theta}$ to estimate $q^{\pi}(s,a)$ as follows,
\[
q^{\pi}(s,a)\approx\phi^{\top}(s,a)\theta\overset{\text{def}}=\hat{Q}_{\theta}(s,a),
\]
where $\phi: \mathcal{S}\times\mathcal{A}\rightarrow\mathbb{R}^{p}$  is a \emph{feature map}, specifically,
$\phi(s,a)=(\varphi_{1}(s,a),\varphi_{2}(s,a),\cdots,\varphi_{p}(s,a))^{\top},$
the corresponding element $\varphi_{i}$ is defined as follows $\varphi_{i}:\mathcal{S}\times\mathcal{A}\rightarrow\mathbb{R}$.
Then $\hat{Q}_{\theta}$ can be written as a matrix version, \[\hat{Q}_{\theta}=\Phi\theta\approx q^{\pi},\]
where $\Phi$ is a $|\mathcal{S}||\mathcal{A}|\times p$ matrix whose rows are the state-action features $\phi(s,a)$,$(s,a)\in\mathcal{S}\times\mathcal{A}$.

\subsection{Semi-Gradient Method}
For a trajectory $\{(S_{k},A_{k},R_{k})\}_{k\ge0}$, 
we define the update rule of Q($\sigma,\lambda$) with semi-gradient method as follows,
\begin{flalign}
\nonumber
\theta_{k+1}
&=\theta_{k}-\alpha_{k}\nabla_{\theta}\big(G_{k}^{\lambda}-\hat{Q}_{\theta}(S_{k},A_{k})\big)^2|_{\theta=\theta_k}\\
\nonumber
&=\theta_{k}-\alpha_{k}\{G_{k}^{\lambda}-\hat{Q}_{\theta_{k}}(S_{k},A_{k})\}\nabla_{\theta}(-\hat{Q}_{\theta}(S_{k},A_{k}))|_{\theta=\theta_k}\\
\label{Eq:semi-gradient}
&=\theta_{k}+\alpha_{k}\{\sum_{t=k}^{\infty}(\lambda\gamma)^{t-k}\delta^{\pi,\sigma}_{k,t}(\theta_{k})\}\phi(S_{k},A_{k}), 
\end{flalign}
where $\alpha_{k}$ is step-size, $G_{k}^{\lambda}$ is an off-line estimate of value function according to Eq.(\ref{mixedoperator}), specifically, for each $k\ge1$,
\begin{flalign}
\label{G}
G_{k}^{\lambda}&=\theta_{k}^{\top}\phi_{k}+\sum_{t=k}^{\infty}(\lambda\gamma)^{t-k}\delta^{\pi,\sigma}_{k,t}(\theta_{k}),\\
\nonumber
\delta^{\pi,\sigma}_{k,t}(\theta_{k})&=\sigma\delta_{t}^{\text{S}}(\theta_{k})+(1-\sigma)\delta_{t}^{\text{ES}}(\theta_{k}),
\end{flalign}
where 
$
\delta_{t}^{\text{S}}(\theta_{k})=R_{t}+\gamma \theta_{k}^{\top}\phi_{t+1}-\theta_{k}^{\top}\phi_{t},~~
\delta_{t}^{\text{ES}}(\theta_{k})= R_{t}+\gamma\mathbb{E}_{\pi}[\theta_{k}^{\top}\phi(S_{t+1},\cdot)]-\theta_{k}^{\top}\phi_{t},
$
and $\phi_{t}\overset{\text{def}}=\phi(S_{t},A_{t})$.

\subsection{Instability Analysis}
Now, we show the iteration (\ref{Eq:semi-gradient}) is an unstable algorithm.
Let's consider the sequence $\{\theta_k\}_{k\ge0}$ generated by the iteration (\ref{Eq:semi-gradient}), then the following holds,
\begin{flalign}
\label{Eq:linear_eq}
\mathbb{E}[\theta_{k+1}|\theta_{0}]=\mathbb{E}[\theta_{k}|\theta_{0}]+\alpha_{k}(A_{\sigma}\hspace{0.05cm}\mathbb{E}[\theta_{k}|\theta_{0}]+b_{\sigma}),
\end{flalign}
where $A_{\sigma}=\mathbb{E}[\hat{A}_k]$ and $b_{\sigma}=\mathbb{E}[\hat{b}_k]$, where
\begin{flalign}
\nonumber
\hat{A}_{k}&=\phi_{k}\sum_{t=k}^{\infty}(\lambda\gamma)^{t-k}\big(\sigma(\gamma\phi_{t+1}-\phi_{t})+\\
\label{A_k}
&~~~~~~~~~~~~~~~~~~~~(1-\sigma)[\gamma\mathbb{E}_{\pi}\phi(S_{t+1},\cdot)-\phi_{t}]\big)^{\top},\\
\label{b_k}
\hat{b}_{k}&=\sum_{t=k}^{\infty}(\lambda\gamma)^{t-k}R_{t}\phi_{t}.
\end{flalign}
Furthermore,
\begin{flalign}
A_{\sigma}
\nonumber
&={\Phi}^{\top}\Xi(I-\gamma\lambda{{P}}^{\mu})^{-1}((1-\sigma)\gamma{P}^{\pi}+\sigma\gamma{P}^{\mu}-{I}){\Phi},\\
b_{\sigma}
&=\Phi\Xi (I-\gamma\lambda{{P}}^{\mu})^{-1}(\sigma\mathcal{R}^{\mu}+(1-\sigma)\mathcal{R}^{\pi}),
\end{flalign}
Eq.(\ref{Eq:linear_eq}) plays a critical role for our analysis,  we provide its proof in Appendix A.

As the same discussion by Tsitsiklis and Van Roy~\shortcite{tsitsiklis1997analysis}; Sutton, Mahmood, and White~\shortcite{sutton2016emphatic}, 
under the conditions of Proposition 4.8 proved by Bertsekas and Tsitsiklis \shortcite{bertsekas1995neuro}, 
\emph{if $A_{\sigma}$ is a negative matrix, then
$\theta_{k}$ generated by iteration (\ref{Eq:semi-gradient}) is a convergent algorithm. By (\ref{Eq:linear_eq}), $\theta_{k}$ converges to the unique TD fixed point $\theta^{*}$}:
\begin{flalign}
\label{TD-fixed-point}
A_{\sigma}\theta^{*}+b_{\sigma}=0.
\end{flalign}

For on-policy case, for $\forall \sigma\in[0,1]$, 
\begin{flalign}
\label{Eq:on_policy_key_matrix}
A_{\sigma}={\Phi}^{\top}\Xi(I-\gamma\lambda{{P}}^{\pi})^{-1}(\gamma{P}^{\pi}-{I}){\Phi}.
\end{flalign}
It has been shown that $A_{\sigma}$ in Eq.(\ref{Eq:on_policy_key_matrix}) is negative definite (e.g. section 9.4 in \cite{sutton2018reinforcement}), thus iteration~(\ref{Eq:semi-gradient}) is a convergent algorithm: it converges to $\theta^{*}$ satisfies (\ref{TD-fixed-point}).

Unfortunately, by the fact that the steady state-action distribution doesn't match the transition probability during off-policy learning, $\forall \sigma\in(0,1)$,  $A_{\sigma}$ may
not have an analog of~(\ref{Eq:on_policy_key_matrix}).
Thus, unlike on-policy learning, there is no guarantee that  $A_{\sigma}$ keeps the negative definite property, thus $\theta_{k}$ may diverge. 
We use a typical example to illustrate it.

\subsection{A Counter Unstable Example}

\begin{figure}[h]
    \centering
    \includegraphics[scale=0.7, trim={1mm 1mm 1mm 1mm}, clip]{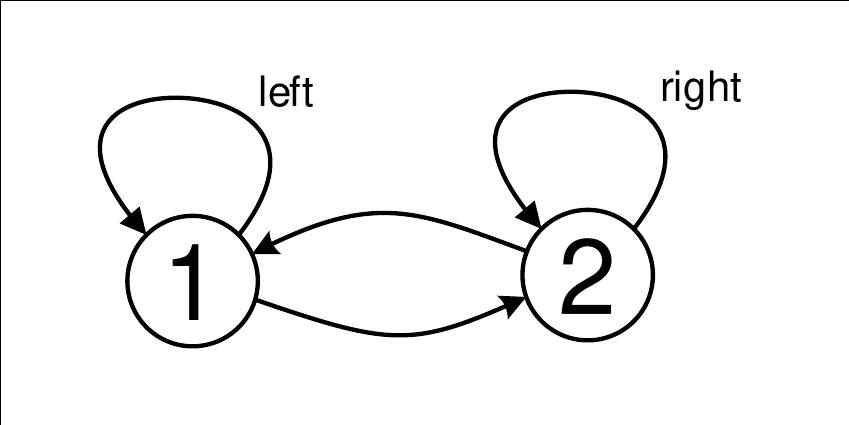}
    \caption{A Counter Example \cite{touati2018convergent}. We assign the features $ \{(1, 0)^{\top}, (2, 0)^{\top}, (0, 1)^{\top}, (0, 2)^{\top}\}$ to the state-action pairs $\{(1,\mathtt{right}),(2,\mathtt{right}),(1,\mathtt{left}),(2,\mathtt{left})\}$, the target policy $\pi(\mathtt{right} |\cdot)=1$ and the behavior policy $\mu(\mathtt{right} | \cdot)=0.5$.
    }
\end{figure}

\begin{figure}[h]
\label{theta_div}
    \centering
    \includegraphics[width=0.35\textwidth]{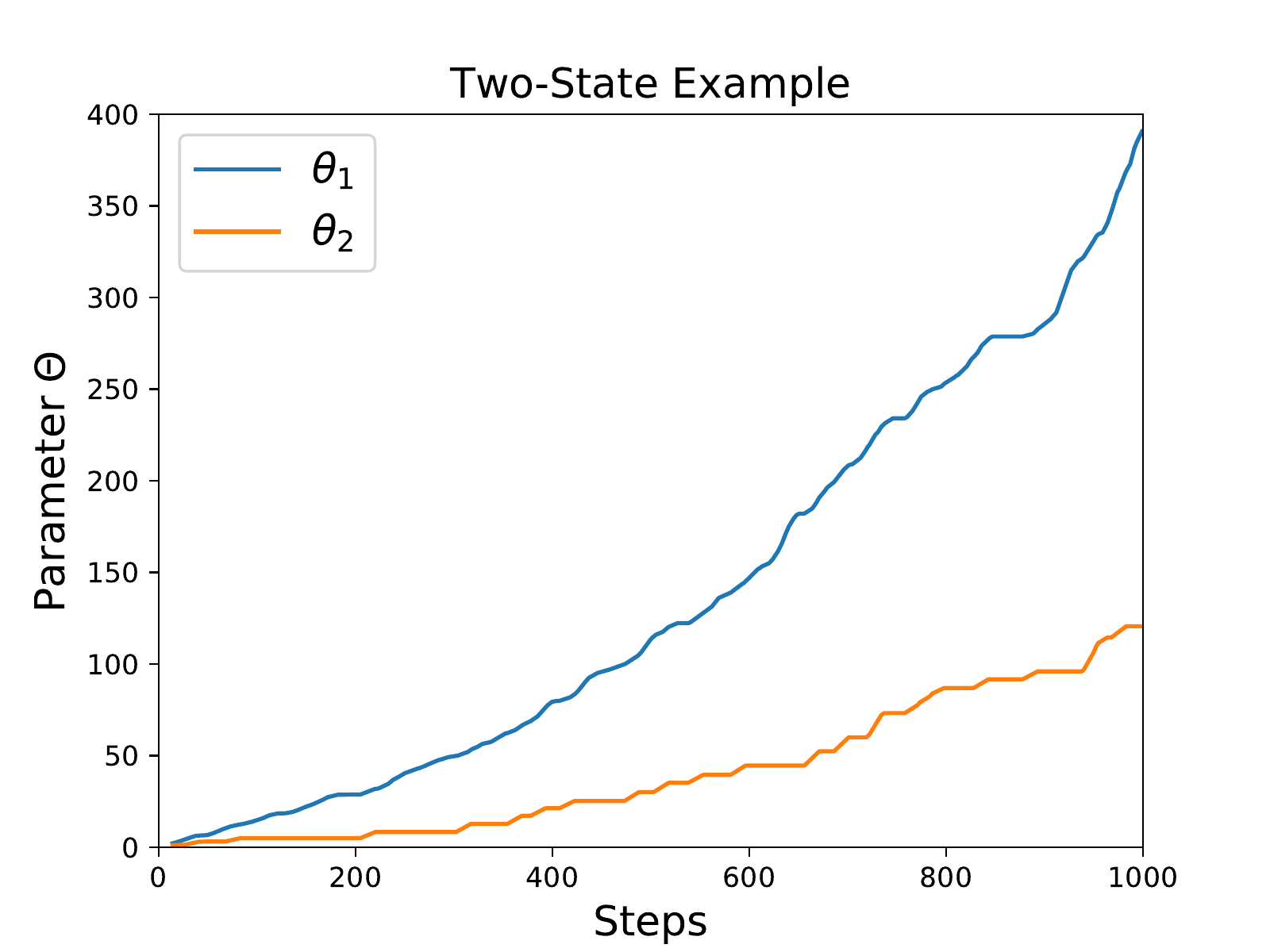}
    \caption{Demonstration of instability on the counter example. The components of parameter $\theta$ are shown in the figure. 
        The initial weights $\theta=(2,0)^{\top}$, $\gamma=0.99$, $\lambda=0.99$. We run $\sigma$ from 0 to 1 with step-size 0.01, and all the solutions are similar. We only show one result here.}
\end{figure}

The Figure 2 shows the numerical solution of the parameter $\theta$ learned by (\ref{Eq:semi-gradient}) for the counterexample (Figure1).
This simple example is very striking because the full-sampling and pure-expectation methods are arguably the simplest
and best-understood methods and the linear, semi-gradient method is arguably the simplest and best-understood kind of function approximation.
This result
shows that even the simplest combination of full-sampling and pure-expectation with function approximation can be unstable if the updates are not done according to the on-policy distribution.

\section{Gradient Q$(\sigma, \lambda)$}

We have discussed the divergence of Q$(\sigma,\lambda)$ with semi-gradient method. In this section, 
we propose a convergent and stable TD algorithm: gradient Q$(\sigma,\lambda)$.

\subsection{Objective Function}
We derive the algorithm via MSPBE \cite{sutton2009fast_a}:
\begin{flalign}
\nonumber
\text{MSPBE}(\theta,\lambda)&=\frac{1}{2}\|\Phi\theta-\Pi\mathcal{B}^{\pi,\mu}_{\sigma,\lambda}(\Phi\theta)\|^{2}_{\Xi},\\
\label{Eq:solve_mspbe}    
\theta^{*}&=\arg\min_{\theta}\text{MSPBE}(\theta,\lambda),
\end{flalign}
where $\Pi = \Phi(\Phi^{\top}\Xi\Phi)^{-1}\Phi^{\top}\Xi$ is an $|\mathcal{S}| \times |\mathcal{S}|$ \emph{projection matrix} which projects any value function into the space generated by $\Phi$.
After some simple algebra, we can further rewrite MSPBE$(\theta,\lambda)$ as a standard weight least-squares equation:
\begin{flalign}
\label{Eq:mspbe}
\text{MSPBE}(\theta,\lambda)=\frac{1}{2}\|A_{\sigma}\theta+b_{\sigma}\|^{2}_{M^{-1}},
\end{flalign}
where $M=\mathbb{E}[\phi_{k}\phi_{k}^{\top}]=\Phi^{\top}\Xi\Phi$.

Now, We define the update rule as follows, for a given trajectory $\{(S_{k},A_{k},R_{k},S_{k+1})\}_{k\ge0}$, $\forall\lambda,\sigma\in[0,1]$:
\begin{flalign}
\nonumber
&e_{k}=\phi_{k}+\gamma\lambda e_{k-1},\\
\nonumber
&\delta_{k}=R_{k}+\gamma\theta_{k}^{\top}(\sigma\phi_{k+1}+(1-\sigma)\mathbb{E}_{\pi}\phi(S_{k+1},\cdot))-\theta_{k}^{\top}\phi_{k},\\
\label{theata_update}
&\theta_{k+1}=\theta_{k}-\alpha_{k}\frac{1}{2}\nabla_{\theta} \text{MSPBE}(\theta,\lambda)|_{\theta=\theta_{k}},
\end{flalign}
where $\alpha_{k}>0$ is step-size, $\phi_{k}=\phi(S_{k},A_{k})$, $e_{k}$ is trace vector and $e_{-1}=0$.
By Eq.(\ref{theata_update}),
it is worth notice that the challenges of solving (\ref{Eq:solve_mspbe}) are two-fold:
\begin{itemize} 
\item
The computational complexity of the invertible matrix $M^{-1}$ is at least $\mathcal{O}(p^3)$~\cite{golub2012matrix}, where $p$ is the dimension of feature space. Thus, it is too expensive to use gradient to solve the problem (\ref{Eq:solve_mspbe}) directly.
\item
Besides, as pointed out by Szepesv{\'a}~\shortcite{szepesvari2010algorithms} and Liu et al.~\shortcite{liu2015finite}, we cannot get an unbiased estimate of $\nabla_{\theta}\text{MSPBE}(\theta,\lambda)=A^{\top}_{\sigma}M^{-1}(A_{\sigma}\theta+b_{\sigma})$. 
In fact, since the update law of gradient involves the product of expectations, 
the unbiased estimate cannot be obtained via a single sample. It needs to sample twice, which is a double sampling problem. Secondly, $M=\mathbb{E}[\phi_{t} \phi_{t}^\top]^{-1}$ cannot also be estimated via a single sample, which is the second bottleneck of applying stochastic gradient to solve problem (\ref{Eq:solve_mspbe}).
\end{itemize}
We provide 
a practical way to solve the above problem in the next subsection.

\subsection{Algorithm Derivation}
The gradient $\nabla_{\theta} \text{MSPBE}(\theta,\lambda)$ in Eq.(\ref{theata_update}) can be replaced by the following equation:
\begin{flalign}
\nonumber
&\frac{1}{2}\nabla_{\theta} \text{MSPBE}(\theta,\lambda)\\
\label{gradient_equal}
=&\nabla_{\theta}\mathbb{E}[\delta_{k}e_{k}]^{\top}\underbrace{\mathbb{E}[\phi_{k}\phi^{\top}_{k}]^{-1}\mathbb{E}[\delta_{k}e_{k}]}_{\overset{\text{def}}=\omega(\theta_{k})}.
\end{flalign}
The proof of Eq.(\ref{gradient_equal}) is similar to the derivation in Chapter 7 of \cite{maei2011gradient}, thus we omit its proof.
Furthermore, the following Proposition \ref{prop1} provides a new way to estimate $\nabla_{\theta} \text{MSPBE}(\theta,\lambda)$.
\begin{proposition}
    \label{prop1}
    Let $e_{t}$ be the eligibility traces vector that is generated as $e_{k}=\lambda\gamma e_{k-1}+\phi_{k}$, let
    \begin{flalign}
\label{Delta}
\Delta_{k,\sigma}&=\gamma\{\sigma\phi_{k+1}+(1-\sigma)\mathbb{E}_{\pi}\phi(S_{k+1},\cdot)\}-\phi_{k},\\
\nonumber
v_{\sigma}(\theta_{k})&=(1-\sigma)\{\mathbb{E}_{\pi}\phi(S_{k+1},\cdot)-\lambda\phi_{k+1}\}e^{\top}_{k}\\
\label{vk}
&\hspace{1cm}+\sigma(1-\lambda)\phi_{k+1}e^{\top}_{k},
\end{flalign}
then the following holds,
\begin{flalign}
\nonumber
\theta_{k+1}&=\theta_{k}-\alpha_{k}\frac{1}{2}\nabla_{\theta} \emph{MSPBE}(\theta,\lambda)|_{\theta=\theta_{k}}\\
\nonumber
&=\theta_{k}-\alpha_{k}\mathbb{E}[\Delta_{k,\sigma}e_{k}^{\top}]\omega(\theta_{k})\\
\label{theta_uptate_2}
&=\theta_{k}+\alpha_{k}\{\mathbb{E}[\delta_{k}e_{k}]-\gamma\mathbb{E}[v_{\sigma}(\theta_{k})]\omega(\theta_{k})\}.
\end{flalign}
\end{proposition}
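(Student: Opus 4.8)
The plan is to prove the two displayed equalities in (\ref{theta_uptate_2}) one after the other, taking the gradient identity (\ref{gradient_equal}) as given. For the first equality I would simply differentiate the quantity that appears inside (\ref{gradient_equal}). The trace $e_k$ is assembled purely from features and carries no dependence on $\theta$, while $\delta_k = R_k + \theta_k^{\top}\Delta_{k,\sigma}$ is affine in $\theta$ with $\nabla_{\theta}\delta_k = \Delta_{k,\sigma}$ by the definition (\ref{Delta}). Hence
\begin{align}
\nabla_{\theta}(\delta_{k}e_{k}) = e_{k}(\nabla_{\theta}\delta_{k})^{\top} = e_{k}\Delta_{k,\sigma}^{\top}.
\end{align}
Taking expectations, transposing, and freezing $\omega(\theta_k)$ (the standard two-timescale device, legitimate because $\omega$ is tracked on the faster scale) turns (\ref{gradient_equal}) into $\tfrac12\nabla_{\theta}\text{MSPBE} = \mathbb{E}[\Delta_{k,\sigma}e_k^{\top}]\omega(\theta_k)$, which is exactly the first line of (\ref{theta_uptate_2}).

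For the second equality I would expand $\Delta_{k,\sigma} = \gamma\psi_k - \phi_k$ with the shorthand $\psi_k = \sigma\phi_{k+1}+(1-\sigma)\mathbb{E}_{\pi}\phi(S_{k+1},\cdot)$, so that $-\mathbb{E}[\Delta_{k,\sigma}e_k^{\top}]\omega = \mathbb{E}[\phi_k e_k^{\top}]\omega - \gamma\mathbb{E}[\psi_k e_k^{\top}]\omega$, and the target is to match this against $\mathbb{E}[\delta_k e_k] - \gamma\mathbb{E}[v_{\sigma}(\theta_k)]\omega$. The definition of $\omega$ in (\ref{gradient_equal}) gives the identity $\mathbb{E}[\phi_k\phi_k^{\top}]\omega = \mathbb{E}[\delta_k e_k]$, which lets me trade the reward-dependent term $\mathbb{E}[\delta_k e_k]$ for $\mathbb{E}[\phi_k\phi_k^{\top}]\omega$. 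The crucial algebraic fact is that the correction $v_{\sigma}$ in (\ref{vk}) is built so that the $\mathbb{E}_{\pi}\phi$ contributions cancel and only a clean $\phi_{k+1}$ term survives:
\begin{align}
\psi_{k}e_{k}^{\top} - v_{\sigma}(\theta_{k}) = \big(\sigma + (1-\sigma)\lambda - \sigma(1-\lambda)\big)\phi_{k+1}e_{k}^{\top} = \lambda\,\phi_{k+1}e_{k}^{\top},
\end{align}
since the scalar coefficient collapses to $\lambda$. Substituting these two facts reduces the whole claim to the single matrix relation $\mathbb{E}[\phi_k(e_k-\phi_k)^{\top}]\omega = \gamma\lambda\,\mathbb{E}[\phi_{k+1}e_k^{\top}]\omega$.

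Finally I would apply the trace recursion $e_k - \phi_k = \gamma\lambda e_{k-1}$ to the left-hand side, which turns the remaining obligation into
\begin{align}
\mathbb{E}[\phi_{k}e_{k-1}^{\top}] = \mathbb{E}[\phi_{k+1}e_{k}^{\top}].
\end{align}
This is where the real content lies: it is a \emph{stationarity} statement, asserting that the time-shifted cross-moment is invariant because, under the stationary distribution $\xi$ induced by the ergodic behavior policy $\mu$, the pair $(\phi_{k+1},e_k)$ has the same law as $(\phi_k,e_{k-1})$. I expect this shift-invariance to be the main obstacle, in the sense that it is the one step that is not pure algebra: the equality only holds in the stationary regime, and it is precisely the assumption that lets the closed forms for $A_{\sigma}$, $b_{\sigma}$ and the linear system (\ref{Eq:linear_eq}) be written as time-independent expectations. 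Once this identity is granted, chaining it back through the substitutions of the previous paragraph yields $-\mathbb{E}[\Delta_{k,\sigma}e_k^{\top}]\omega = \mathbb{E}[\delta_k e_k] - \gamma\mathbb{E}[v_{\sigma}(\theta_k)]\omega$, completing the second equality and hence the proposition.
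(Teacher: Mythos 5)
Your proposal is correct and follows essentially the same route as the paper's Appendix B proof: both pass through $\tfrac12\nabla_\theta \text{MSPBE}(\theta,\lambda)=\mathbb{E}[\Delta_{k,\sigma}e_k^{\top}]\omega(\theta_k)$ and then combine the definition $\mathbb{E}[\phi_k\phi_k^{\top}]\omega(\theta_k)=\mathbb{E}[\delta_k e_k]$, the trace recursion $e_k-\phi_k=\gamma\lambda e_{k-1}$, and the stationarity shift $\mathbb{E}[\phi_k e_{k-1}^{\top}]=\mathbb{E}[\phi_{k+1}e_k^{\top}]$ --- the last of which the paper invokes silently when it replaces $\mathbb{E}[\phi_k e_k^{\top}]$ by $\mathbb{E}[\phi_k\phi_k^{\top}+\gamma\lambda\phi_{k+1}e_k^{\top}]$, whereas you isolate and name it explicitly. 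One minor remark: no two-timescale ``freezing'' of $\omega(\theta_k)$ is needed for your first equality, since Eq.~(\ref{gradient_equal}) is exact --- MSPBE is quadratic in $\theta$, and the factor of two from differentiating the quadratic form cancels the $\tfrac12$.
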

\begin{proof}
See Appendix B.
\end{proof}

It is too expensive to calculate inverse matrix $\mathbb{E}[\phi_{k}\phi^{\top}_{k}]^{-1}$ in Eq.(\ref{gradient_equal}).
In order to develop an efficient $\mathcal{O}(p)$ algorithm, Sutton et al.\shortcite{sutton2009fast_a} use a weight-duplication trick. They propose the way to estimate $\omega(\theta_{k})$ on a fast timescale:
\begin{flalign}
\label{gq_update2}
\omega_{k+1}=\omega_{k}+\beta_{k}(\delta_{k}e_{k}-\phi_k\omega^{\top}_{k}\phi_k).
\end{flalign}
Now, sampling from Eq.(\ref{theta_uptate_2}) directly , we define the update rule of $\theta$ as follows,
\begin{flalign}
\label{gq_update1}
\theta_{k+1}=\theta_{k}+\alpha_{k}(\delta_{k}e_{k}-\gamma v_{\sigma}(\theta_{k})\omega_{k})
\end{flalign}
where $\delta_{k},e_{k}$ is defined in Eq.(\ref{theata_update}), $v_{\sigma}(\theta_{k})$ is defined in Eq.(\ref{vk}) and $\alpha_{k},\beta_{k}$ are step-size.
More details of gradient Q$(\sigma,\lambda)$ are summary in Algorithm \ref{alg:algorithm1}.
\begin{algorithm}[tb]
    \caption{Gradient Q$(\sigma,\lambda)$}
    \label{alg:algorithm1}
    \begin{algorithmic}
        \STATE { \textbf{Require}:Initialize parameter $\omega_{0},v_{0}=0$, ${\theta}_{0}$ arbitrarily, $\alpha_{k}>0,\beta_{k}>0$}. \\
        \STATE { \textbf{Given}: target policy $\pi$, behavior policy $\mu$.}\\
        \FOR{$i=0$ {\bfseries to} $n$}
        \STATE ${e}_{-1}={0}$.
        \FOR{$k=0$ {\bfseries to} $T_{i}$}
        \STATE Observe $\{S_{k},A_{k},R_{k+1},S_{k+1}\}$ by $\mu$.
        \STATE {\color{blue}{\# Update traces }}
        \STATE ${e}_{k}=\lambda\gamma {e}_{k-1}+{\phi}_{k}$.
        \STATE $\delta_{k}=R_{k}+\gamma\{\sigma{\theta}^{\top}_{k}{\phi}(S_{k+1},A_{k+1})$
        \STATE$\hspace{0.7cm}+(1-\sigma){\theta}^{\top}_{k}\mathbb{E}_{\pi}{\phi}(S_{k+1},\cdot)\}-{\theta}^{\top}_{k}\phi(S_{k},A_{k})$.
        \STATE {\color{blue}{\# Update parameter}}
        \STATE $v_{k+1}=\sigma(1-\lambda)\phi(S_{k+1},A_{k+1})e^{\top}_{k}$
        \STATE \hspace{0.7cm}$+(1-\sigma)\{\mathbb{E}_{\pi}\phi(S_{k+1},\cdot)-\lambda\phi(S_{k+1},A_{k+1})\}e^{\top}_{k}.$
        \STATE${\theta}_{k+1}=\theta_{k}+\alpha_{k}(\delta_{k}e_{k}-\gamma v_{k}\omega_{k})$.
        \STATE$\omega_{k+1}=\omega_{k}+\beta_{k}\{\delta_{k}e_{k}-\phi(S_{k},A_{k})\omega^{\top}_{k}\phi(S_{k},A_{k})\}.$
        \ENDFOR
        \ENDFOR
        \STATE { \textbf{Output}:${\theta}$}
    \end{algorithmic}
\end{algorithm}

\subsection{Convergence Analysis}

We need some additional assumptions to present the convergent of Algorithm \ref{alg:algorithm1}.
\begin{assumption}
    \label{ass:positive_lr}
    The positive sequence $\{\alpha_{k}\}_{k\ge0}$, $\{\beta_{k}\}_{k\ge0}$ satisfy $\sum_{k=0}^{\infty}\alpha_{k}=\sum_{k=0}^{\infty}\beta_{k}=\infty,\sum_{k=0}^{\infty}\alpha^{2}_{k}<\infty,\sum_{k=0}^{\infty}\beta^{2}_{k}<\infty$
    with probability one.
\end{assumption}

\begin{assumption}[Boundedness of Feature, Reward and Parameters\cite{liu2015finite}]
    \label{ass:boundedness}
    (1)The features $\{\phi_{t}, \phi_{t+1}\}_{t\ge0}$ have uniformly bounded second moments, where $\phi_{t}=\phi(S_{t}),\phi_{t+1}=\phi(S_{t+1})$.
    (2)The reward function has uniformly bounded second moments.
    (3)There exists a bounded region $D_{\theta}\times D_{\omega}$, such that $\forall (\theta,\omega)\in D_{\theta}\times D_{\omega}$.
\end{assumption}
Assumption \ref{ass:boundedness} guarantees that the matrices $A_{\sigma}$ and $M$, and vector $b_{\sigma}$ are uniformly bounded.
After some simple algebra, we have $A_{\sigma}=\mathbb{E}[e_{k}\Delta_{k,\sigma}]$, see Appendix C.
The following Assumption \ref{invA} implies $A_{\sigma}^{-1}$ is well-defined.
\begin{assumption}
    \label{invA}
    $A_{\sigma}=\mathbb{E}[e_{k}\Delta_{k,\sigma}]$ is non-singular, where $\Delta_{k,\sigma}$ is defined in Eq.(\ref{Delta}).
\end{assumption}

\begin{theorem}[Convergence of Algorithm \ref{alg:algorithm1}]
    \label{theorem2}
    Consider the iteration $(\theta_{k},\omega_k)$ generated by (\ref{gq_update1}) and (\ref{gq_update2}), if $\beta_{k}=\eta_{k}\alpha_{k}$, $\eta_{k}\rightarrow0$, as $k\rightarrow\infty$ , and $\alpha_{k},\beta_k$ satisfies Assumption \ref{ass:positive_lr}. 
    The sequence $\{(\phi_{k},R_{k},\phi_{k+1})\}_{k\ge0}$ satisfies Assumption \ref{ass:boundedness}.
    Furthermore, $A_{\sigma}=\mathbb{E}[e_{k}\Delta_{k,\sigma}]$ satisfies Assumption \ref{invA}.
        Let
        \begin{flalign}
        \label{def:G}
        G(\omega,\theta)&=A^{\top}_{\sigma}M^{-1}(A_{\sigma}\theta+b_{\sigma}),\\
         \label{def:H}
    H(\omega,\theta)&=A_{\sigma}\theta+b_{\sigma}-M \omega.
    \end{flalign}
    Then $(\theta_{k},\omega_k)$ converges to $(\theta^{*},\omega^{*})$ with probability one,
    where $(\theta^{*},\omega^{*})$ is the unique global asymptotically stable equilibrium w.r.t ordinary differential equation (ODE)
    $\dot\theta(t)=G(\Omega(\theta),\theta), \dot\omega(t)=H(\omega,\theta)$ correspondingly, and $\Omega(\theta)$: $ \theta\mapsto M^{-1}(A_{\sigma}\theta+b_\sigma)$.
\end{theorem}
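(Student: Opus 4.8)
The plan is to cast the coupled updates (\ref{gq_update1})--(\ref{gq_update2}) into the canonical two-timescale stochastic-approximation framework of Borkar and to analyze the associated ODEs. First I would write each recursion in the standard form $x_{k+1}=x_k+(\text{step})\,(\text{mean field}+\text{noise})$. For the $\omega$-recursion, taking conditional expectation under the stationary behavior distribution and using $\mathbb{E}[\delta_k e_k]=A_\sigma\theta+b_\sigma$ (which follows from $\delta_k=R_k+\theta^\top\Delta_{k,\sigma}$ and $A_\sigma=\mathbb{E}[e_k\Delta_{k,\sigma}^\top]$) together with $\mathbb{E}[\phi_k\phi_k^\top]=M$ gives the mean field $H(\omega,\theta)=A_\sigma\theta+b_\sigma-M\omega$, and the residual $\delta_k e_k-\phi_k\omega_k^\top\phi_k-H(\omega_k,\theta_k)$ is a martingale-difference sequence $M^{(2)}_{k+1}$. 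Likewise, Proposition~\ref{prop1} identifies the mean field of the $\theta$-recursion with the descent direction $-G$ evaluated at the current $\omega$, leaving a martingale-difference remainder $M^{(1)}_{k+1}$. Guided by the ODE structure, I would designate $\omega$ as the fast component, equilibrating to $\Omega(\theta)$, and $\theta$ as the slow component tracking $\Omega(\theta)$.

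Next I would verify the hypotheses of the two-timescale theorem. Assumption~\ref{ass:positive_lr} supplies the Robbins--Monro step-size conditions, and, together with the timescale-separation condition relating $\alpha_k$ and $\beta_k$, guarantees that the fast iterate sees the slow one as quasi-static while the slow iterate sees the fast one as equilibrated. Assumption~\ref{ass:boundedness} (bounded second moments of features and rewards, and $\gamma\lambda<1$ so that the trace $e_k=\lambda\gamma e_{k-1}+\phi_k$ has uniformly bounded second moment) ensures that $A_\sigma,b_\sigma,M$ are well-defined and that the noise terms $M^{(i)}_{k+1}$ are square-integrable with conditionally bounded variance. The mean fields $H$ and $G(\Omega(\cdot),\cdot)$ are affine in their arguments, hence globally Lipschitz. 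Boundedness of the iterates, needed to invoke the ODE method, follows from the bounded region in Assumption~\ref{ass:boundedness}(3), or alternatively from a Borkar--Meyn scaling argument applied to the linear mean fields.

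The analytic core is the stability of the two ODEs. On the fast timescale, with $\theta$ frozen, $\dot\omega=H(\omega,\theta)=(A_\sigma\theta+b_\sigma)-M\omega$ is a linear system whose matrix $-M=-\Phi^\top\Xi\Phi$ is negative definite (since $M$ is positive definite), so it has the unique globally asymptotically stable equilibrium $\omega=\Omega(\theta)=M^{-1}(A_\sigma\theta+b_\sigma)$. On the slow timescale I would substitute $\omega=\Omega(\theta)$ and study the realized descent flow $\dot\theta=-G(\Omega(\theta),\theta)=-A_\sigma^\top M^{-1}(A_\sigma\theta+b_\sigma)$, taking $\text{MSPBE}(\theta,\lambda)=\frac{1}{2}\|A_\sigma\theta+b_\sigma\|^2_{M^{-1}}$ as a Lyapunov function. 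Its derivative along the flow is $-\|A_\sigma^\top M^{-1}(A_\sigma\theta+b_\sigma)\|^2\le 0$, strictly negative off the set $A_\sigma\theta+b_\sigma=0$; Assumption~\ref{invA} (non-singularity of $A_\sigma$) then makes $\theta^\ast=-A_\sigma^{-1}b_\sigma$ the unique equilibrium and the inequality strict everywhere else, yielding global asymptotic stability. Combining the two layers through Borkar's theorem gives $(\theta_k,\omega_k)\to(\theta^\ast,\omega^\ast)$ with $\omega^\ast=\Omega(\theta^\ast)$ almost surely.

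The main obstacle is the slow-timescale stability argument under off-policy sampling: unlike the on-policy semi-gradient case analyzed earlier, $A_\sigma$ need not be negative definite, so one cannot use it directly as in (\ref{Eq:on_policy_key_matrix}). The MSPBE route replaces $A_\sigma$ by the always-positive-semidefinite $A_\sigma^\top M^{-1}A_\sigma$, so the Lyapunov argument only requires $A_\sigma$ to be non-singular; making this substitution rigorous and confirming that positive-definiteness of $M$ together with non-singularity of $A_\sigma$ is exactly what forces strict descent is the crux. A secondary technical point is controlling the eligibility-trace noise: because $e_k$ is defined by an infinite recursion, I would first establish uniform $L^2$-boundedness of $e_k$ from $\gamma\lambda<1$, after which the martingale-difference and Lipschitz estimates needed by the two-timescale theorem go through.
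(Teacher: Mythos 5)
Your proposal follows essentially the same route as the paper's proof: both cast the coupled updates (\ref{gq_update1})--(\ref{gq_update2}) into Borkar's two-timescale stochastic-approximation framework (the paper's Lemma~\ref{Borkar-two--timescale} in Appendix~D), verify the step-size, Lipschitz, and square-integrable martingale-difference conditions from Assumptions~\ref{ass:positive_lr}--\ref{invA}, identify the fast equilibrium $\Omega(\theta)=M^{-1}(A_\sigma\theta+b_\sigma)$ from positive definiteness of $M$, and obtain the slow equilibrium $\theta^{*}=-A_\sigma^{-1}b_\sigma$ from positive definiteness of $A_\sigma^{\top}M^{-1}A_\sigma$ under the non-singularity Assumption~\ref{invA}. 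The only differences are minor and in your favor: you verify the slow-timescale condition with $\text{MSPBE}(\theta,\lambda)=\frac{1}{2}\|A_\sigma\theta+b_\sigma\|^{2}_{M^{-1}}$ as an explicit Lyapunov function (the paper instead invokes the scaled-limit ODEs $H_\infty$, $G_\infty$), you explicitly flag the iterate-boundedness caveat of Borkar's lemma, and you keep the signs straight --- the stable mean fields must be $-M\omega$ and $-A_\sigma^{\top}M^{-1}(A_\sigma\theta+b_\sigma)$, minus signs that the paper's write-up of Steps~3 and~4 drops.
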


\begin{proof}
The ODE method (see Lemma 1; Appendix D) is our main tool to prove Theorem \ref{theorem2}.
    Let 
    \begin{flalign}
    \label{M_k,N_k}
    M_{k}&=\delta_{k}e_{k}-\gamma v_{\sigma}(\theta_{k})\omega_{k}-M^{-1}(A_{\sigma}\theta_{k}+b_{\sigma}), \\
    N_{k}&=(\hat{A}_{k}-A_{\sigma})\theta_k+(\hat{b}_{k}-b_{\sigma})-(\hat{M}_{k}-M)\omega_{k}.
    \end{flalign}
Then, we rewrite the iteration (\ref{gq_update1}) and (\ref{gq_update2}) as follows,
    \begin{flalign}
    \label{gq_update1_1}
    \theta_{k+1}&=\theta_{k}+\alpha_{k}(M^{-1}(A_{\sigma}\theta_k+b_{\sigma})+M_k),\\
    \label{gq_update2_1}
    \omega_{k+1}&=\omega_{k}+\beta_{k}(H(\theta_{k},\omega_{k})+N_{k}).
    \end{flalign} 
    The Lemma 1 requires us to verify the following 4 steps.

    \underline{\textbf{Step 1: (Verifying the condition A2)}} \emph{Both of the functions $G$ and $H$ are Lipschitz functions.}

    By Assumption \ref{ass:boundedness}-\ref{invA}, $A_{\sigma}$, $b$ and $M$ are uniformly bounded, thus it is easy to check $G$ and $H$ are Lipschitz functions.

    \underline{\textbf{Step 2: (Verifying the condition A3)} } 
    \emph{Let the $\sigma$-field $\mathcal{F}_{k}=\sigma\{\theta_{t},\omega_{t};t\leq k\}$, then $\mathbb{E}[M_{k}|\mathcal{F}_k]=\mathbb{E}[N_{k}|\mathcal{F}_k]=0.$ 
    Furthermore, there exists non-negative a constant $K > 0$, s.t. $\{M_{k}\}_{k\in\mathbb{ N}}$ and$ \{N_{k}\}_{k\in\mathbb{ N}}$ are square-integrable with
    \begin{flalign}
    \label{MN}
    \mathbb{E}[\|M_{k}\|^{2}|\mathcal{F}_{k}],\mathbb{E}[\|N_{k}\|^{2}|\mathcal{F}_{k}]\leq K(1+\|\theta_{k}\|^{2}+\|\omega_{k}\|^{2}).
    \end{flalign}
   }

    \begin{figure*}[t]
    \centering
    \subfigure
    {\includegraphics[width=5.8cm,height=4.2cm]{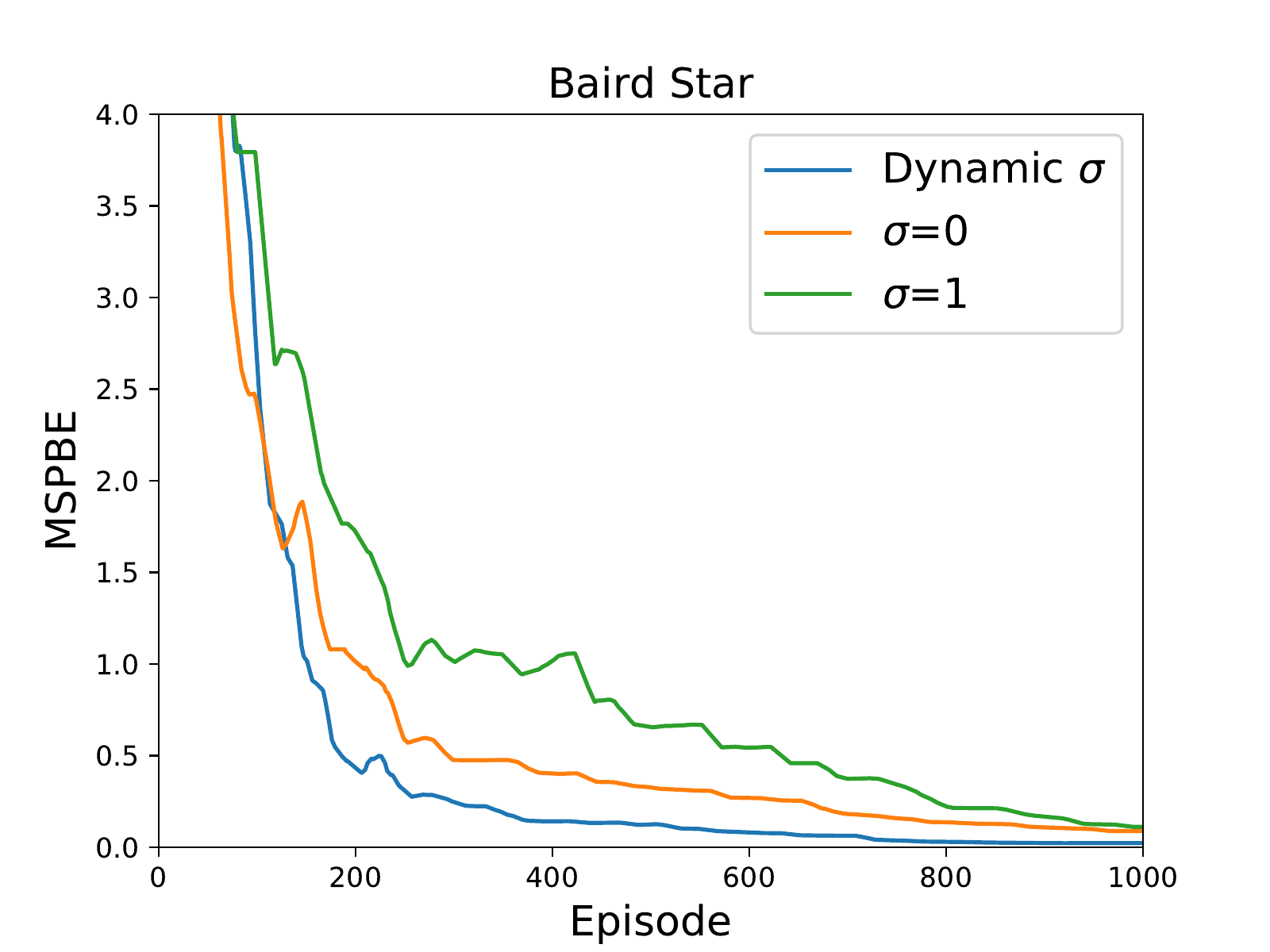}}
    \subfigure
    {\includegraphics[width=5.8cm,height=4.2cm]{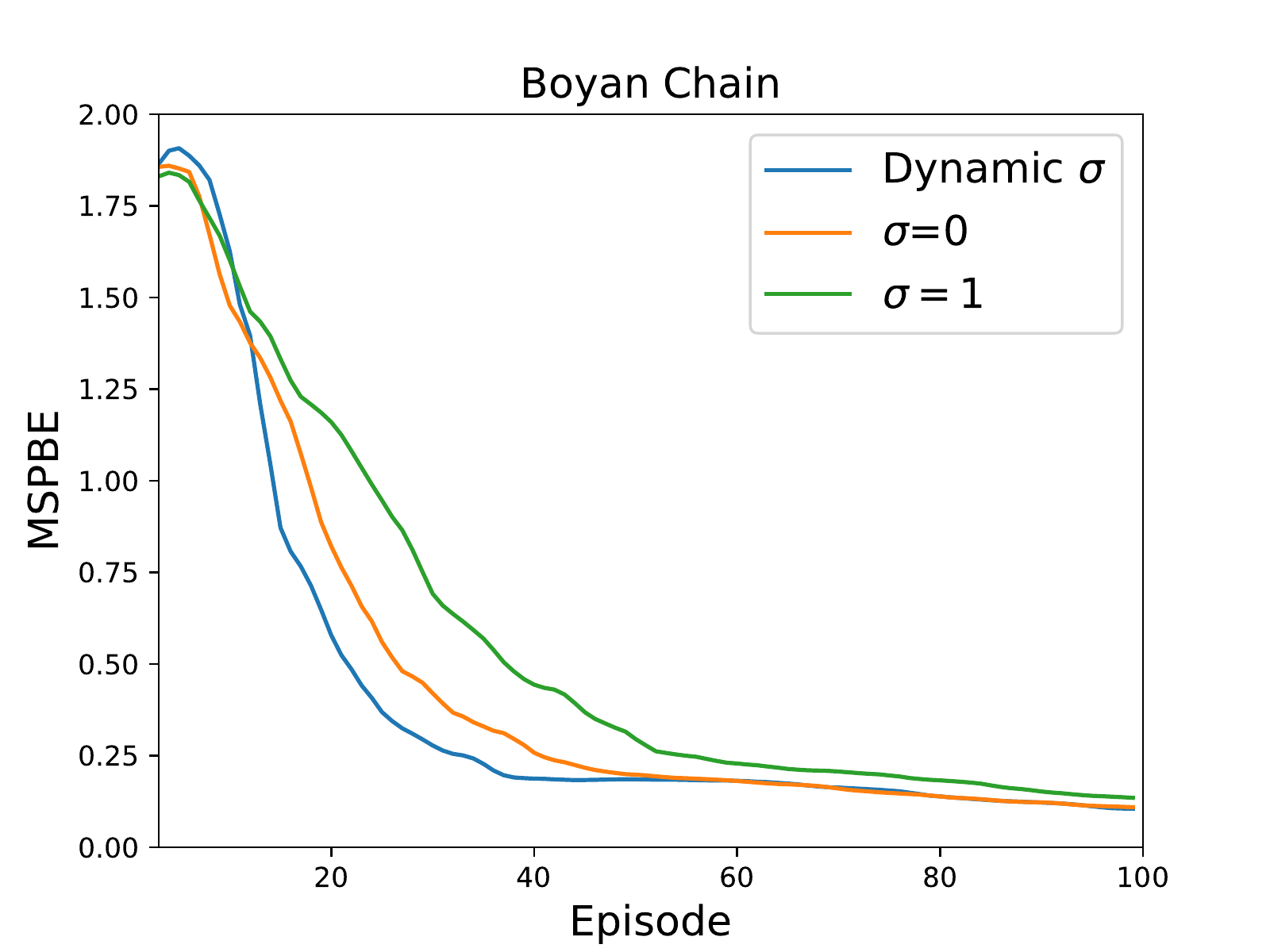}}
    \subfigure
    {\includegraphics[width=5.8cm,height=4.2cm]{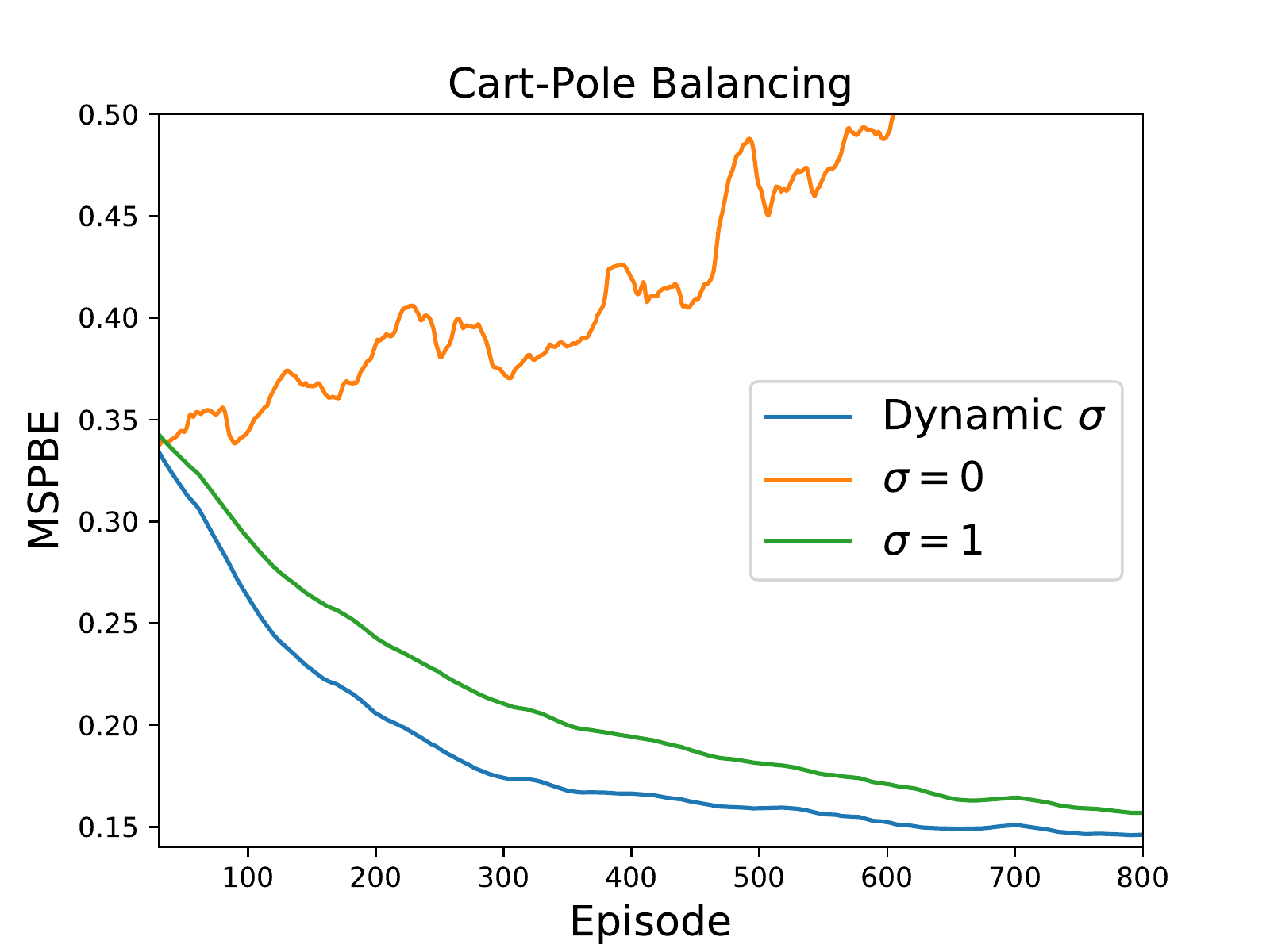}}
    \caption
    {
        MSPBE comparison with different $\sigma$: $\sigma= 0$ (pure-expectation) , $\sigma= 1$ (full-sampling). Dynamic $\sigma\in (0,1)$.
    }
\end{figure*}
    
    By Eq.(\ref{theta_uptate_2}), $\mathbb{E}[M_{k}|\mathcal{F}_k]=0$. 
    With $\mathbb{E}[\hat{A}_{k}]=A_{\sigma}, \mathbb{E}[\hat{b}_{k}]=b_{\sigma}, \mathbb{E}[\hat{M}_{k}]=M,$ we have
    $\mathbb{E}[N_{k}|\mathcal{F}_k]=0$.
    By Assumption \ref{ass:boundedness}, Eq.(\ref{A_k}) and Eq.(\ref{b_k}), there exists non-negative 
    constants $K_1,K_2,K_3$ such that $\{\|\hat{A}_{k}\|^{2},\|A_{\sigma}\|^{2}\}\leq K_1$,
    $\{\|\hat{b}_{k}\|^{2},\|b_{\sigma}\|^{2}\}\leq K_2$, $\{\|\hat{M}_{k}\|^{2},\|M\|^{2}\}\leq K_3$, which implies all above terms are bounded.
    Thus, there exists a non-negative 
    constant $\widetilde{K}_1$ s.t the following Eq.(\ref{b}) holds,
    \begin{flalign} 
    \nonumber
    &\mathbb{E}[\|N_{k}\|^{2}|\mathcal{F}_{k}]\\
     \nonumber
    \leq&\mathbb{E}\Big[(\|(\hat{A}_{k}-A_{\sigma})\theta_{k}\|+\|\hat{b}_{k}-b_{\sigma}  \|  +  \| (\hat{M}_{k}-M)\omega_{k}\|)^{2}\Big|\mathcal{F}_{k}\Big]\\
    \label{b}
\leq& {\widetilde{K}_1}^{2}(1+\|\theta_{k}\|^{2}+\|\omega_{k}\|^{2}).
    \end{flalign}
Similarly, by Assumption \ref{ass:boundedness}, $R_{k},\phi_k$ have uniformly bounded second,
then $\mathbb{E}[\|M_{k}\|^{2}|\mathcal{F}_{k}]\leq{\widetilde{K}_2}^{2}(1+\|\theta_{k}\|^{2}+\|\omega_{k}\|^{2})$ holds for a constant $\widetilde{K}_2>0$.
Thus, Eq.(\ref{MN}) holds.

\underline{\textbf{Step 3: (Verifying the condition A4)} } 
\emph{For each $\theta\in\mathbb{R}^{p}$, the ODE
    $
    \dot \omega(t) = H(\omega(t), \theta)
    $
    has a unique global asymptotically stable equilibrium $\Omega(\theta)$ such that:
    $\Omega(\theta):\mathbb{R}^{p}\rightarrow\mathbb{R}^{k}$ is Lipschitz.}

For a fixed $\theta$, let
$H_{\infty}(\omega,\theta)=\lim_{r\rightarrow\infty}\dfrac{H(r\omega(t),\theta)}{r}=M\omega({t}).$
We consider the ODE
\begin{flalign}
\label{ode:h-infty}
\dot{\omega}(t)=H_{\infty}(\omega,\theta)=M\omega(t).
\end{flalign}
Assumption \ref{invA} implies that $M$ is a positive definite matrix, thus,
for ODE (\ref{ode:h-infty}), origin is a globally asymptotically stable equilibrium.
Thus, for a fixed $\theta$, by Assumption \ref{invA}, 
\begin{flalign}
\label{def:oemga-star}
\omega^{*}=M^{-1}(A_{\sigma}\theta+b_{\sigma})
\end{flalign}
is the unique globally asymptotically stable equilibrium of ODE
$\dot \omega(t) = A_{\sigma}\theta+b_{\sigma}-M\omega(t)\overset{(\ref{def:H})}=H(\omega(t), \theta).$
Let $\Omega(\theta): \theta\mapsto M^{-1}(A_{\sigma}\theta+b_{\sigma}),$
it is obvious $\Omega$ is Lipschitz.


\underline{\textbf{Step 4: (Verifying the condition A5)} } \emph{The ODE $\dot \theta(t) =G\big(\Omega(\theta(t)),\theta(t)\big)$ has a unique global asymptotically stable equilibrium $\theta^{*}$.}

Let
$
    G_{\infty}(\theta)=\lim_{r\rightarrow\infty}\frac{G(r\theta,\omega)}{r}
=A^{\top}_{\sigma} M^{-1} A_{\sigma}\theta$.
We consider the following ODE
\begin{flalign}
    \label{ode-G}
    \dot{\theta}(t)=G_{\infty}(\theta(t)).
\end{flalign}
By Assumption \ref{ass:boundedness}-\ref{invA}, $A_\sigma$ is invertible and $M^{-1}$ is positive definition, thus $A^{\top}_{\sigma} M^{-1} A_{\sigma}$ is a positive defined matrix.
Thus the ODE (\ref{ode-G}) has unique global asymptotically stable equilibrium: origin point.
Now, let's consider the iteration (\ref{gq_update1})/(\ref{gq_update1_1})
associated with the ODE
$
\dot{\theta}(t)=(\gamma\mathbb{E}[\sigma\phi_{k+1}+(1-\sigma)\mathbb{E}_{\pi}[\phi(S_{k+1},\cdot)]]e_{k}^{\top}M^{-1}-I)\mathbb{E}[\delta_{k}e_{k}|\theta(t)]
,$ which can be rewritten as follows,
\begin{flalign}
\label{ode-theta-1}
    \dot{\theta}(t)&=\mathbb{E}[\Delta_{k,\sigma}e_{k}^{\top}]M^{-1}(A_{\sigma}\theta(t)+b_\sigma)
    \\
    \label{ode-theta-2}
    &=A^{\top}_{\sigma}M^{-1}(A_{\sigma}\theta(t)+b_\sigma)
    \overset{( \ref{def:G})}=G(\theta(t)),
\end{flalign}
Eq.(\ref{ode-theta-2}) holds due to $\mathbb{E}[\delta_{k}e_{k}|\theta(t)]=A_{\sigma}\theta(t)+b_\sigma$.
By Assumption \ref{invA}, $A_\sigma$ is invertible, then 
\begin{flalign}
\label{def:theta-star}
\theta^{*}=-A_{\sigma}^{-1}b_{\sigma}
\end{flalign} is the unique global asymptotically stable equilibrium of ODE (\ref{ode-theta-2}). 

Since then, we have verified all the conditions of Lemma 1, thus the following almost surely 
\[
(\theta_{k},\omega_k)~\rightarrow~~(\theta^{*},\omega^{*}), ~~\text{as}~~
k\rightarrow\infty,
\]
where $\omega^{*}$ is defined in (\ref{def:oemga-star}), $\theta^{*}$ is defined in (\ref{def:theta-star}).
\end{proof}

\section{Experiment}

In this section, we test both policy evaluation and control capability of the proposed GQ$( \sigma,\lambda)$ algorithm and validate the trade-off between full-sampling and pure-expectation on some standard domains.
 In this section, for all experiments, we set the hyper parameter $\sigma$ as follows, $\sigma\sim\mathcal{N}(\mu, 0.01^2)$, where $\mu$ ranges dynamically from 0.02 to 0.98 with step of 0.02, and $\mathcal{N}(\cdot,0.01^2)$ is Gaussian distribution with standard deviation $0.01$. In the following paragraph, 
we use the term \emph{dynamic} $\sigma$ to represent the above way to set $\sigma$.

\subsection{Policy Evaluation Task}

We employ three typical domains in RL for policy evaluation:
Baird Star \cite{baird1995residual}, Boyan Chain \cite{boyan2002technical} and linearized Cart-Pole balancing.

\textbf{Domains}~ Baird Star is a well known example for divergence in off-policy TD learning, which considers 
$7$-state $\mathcal{S}=\{\mathtt{s}_1,\cdots,\mathtt{s}_7\}$ and
$\mathcal{A}$= \{$\mathtt{dashed}$, $\mathtt{solid}$\}.
The behavior policy $\mu$ selects the $\mathtt{dashed}$ and $\mathtt{solid}$ actions
with $\mu(\mathtt{dashed}|\cdot)=\frac{6}{7}$ and $\mu(\mathtt{solid}|\cdot)=\frac{1}{7}$.
The target policy always takes the $\mathtt{solid}$: $\pi(\mathtt{solid}|\cdot)=1$.

The second benchmark MDP is the classic chain example from \cite{boyan2002technical},
which considers a chain of $14$ states $\mathcal{S} = \{\mathtt{s}_1, \cdots, \mathtt{s}_{14}\}$. 
Each transition from state $\mathtt{s}_i$ results in state $\mathtt{s}_{i+1}$ or $\mathtt{s}_{i+2}$ with equal probability and a reward of $-3$.
The behavior policy we chose is random.

For the limitation of space, we provide more details of the dynamics of Boyan Chain and Baird Star, chosen policy and features in Appendix E.

Cart-Pole balancing is widely used for many RL tasks.
According to ~\cite{dann2014policy}, the target policy we use in this section is the optimal policy
$\pi^{*}(a|s)= \mathcal{N}(a|{\beta_{1}^{*}}^{\top}s,(\sigma_{1}^{*})^{2})$,
where the hyper parameters $\beta_{1}^{*}$ and $\sigma_{1}^{*}$ are computed using dynamic programming. 
The feature chosen according to ~\cite{dann2014policy} is a imperfect feature set : $\phi(s)=(1,s_{1}^{2},s_{2}^{2},s_{3}^{2},s_{4}^{2})^{\top}, ~\text{where}~s=(s_1,s_2,s_3,s_4)^{\top}.$

\textbf{Performance Measurement}~In this section,
we use 
the empirical $\text{MSPBE}=\frac{1}{2}\|\hat{b}_{\sigma}-\hat{A}_{\sigma}\theta\|^{2}_{\hat{M}^{-1}}$
to evaluate the performance of all the algorithms, where we evaluate $\hat{A}_\sigma$, $\hat{b}_\sigma$, and $\hat{M}$ according to their unbiased estimates (\ref{A_k}), $(\ref{b_k})$ and $\phi_{k}\phi^{\top}_{k}$, the features are presented in Appendix E.

\textbf{Results Report}
Figure 3 shows the results with dynamic $\sigma$ achieves the best performance on all the three domains. 
Our results show that an intermediate value of $\sigma$, which results in a mixture of the full-sampling and pure-expectation algorithms,  performs better than either extreme ($\sigma=0$ or $1$). 
This validates the trade-off between full-sampling and pure-expectation for policy evaluation in standard domains, which also give us some insights that 
unifying some disparate existing method can create a better performing algorithm.

\subsection{Control Task}
In this section, we test the control capability of GQ$( \sigma,\lambda)$ algorithm on mountain car domain, where the agent considers the task of driving an underpowered car up a steep mountain road. 
The agent receives a reward of $-1$ at every step until it reaches the goal region at the top of the hill.
Since the state space of this domain is continuous, we use the open tile coding software\footnote{\url{http://incompleteideas.net/rlai.cs.ualberta.ca/RLAI/RLtoolkit/tilecoding.html}} to extract feature of states. 

\textbf{Empirical Performance Comparison}
The performance shown in Figure 4 is an average of 100 runs, and each run contains 400 episodes. 
We set $\lambda=0.99$, $\gamma=0.99$, and step-size $\alpha_{k}=\{10^{-2},2\times 10^{-2},10^{-3},2\times 10^{-3}\}$, $
\eta_{k}=\{2^{0},2^{-1},\cdots,2^{-10}\}$.

\begin{figure}[t]
    \centering
    \subfigure[]
    {\includegraphics[width=4.1cm,height=3.2cm]{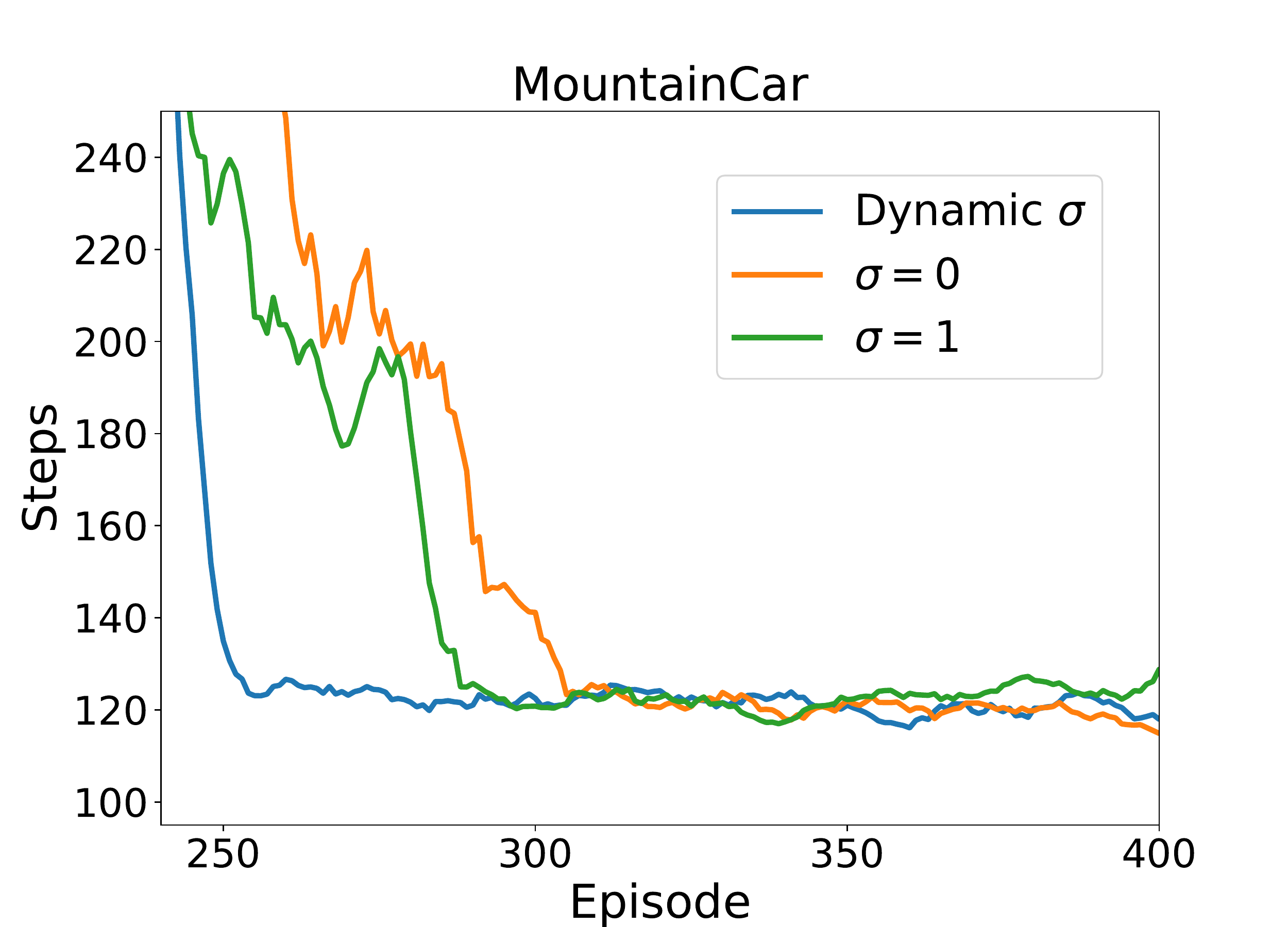}}
    \subfigure[]
    {\includegraphics[width=4.1cm,height=3.2cm]{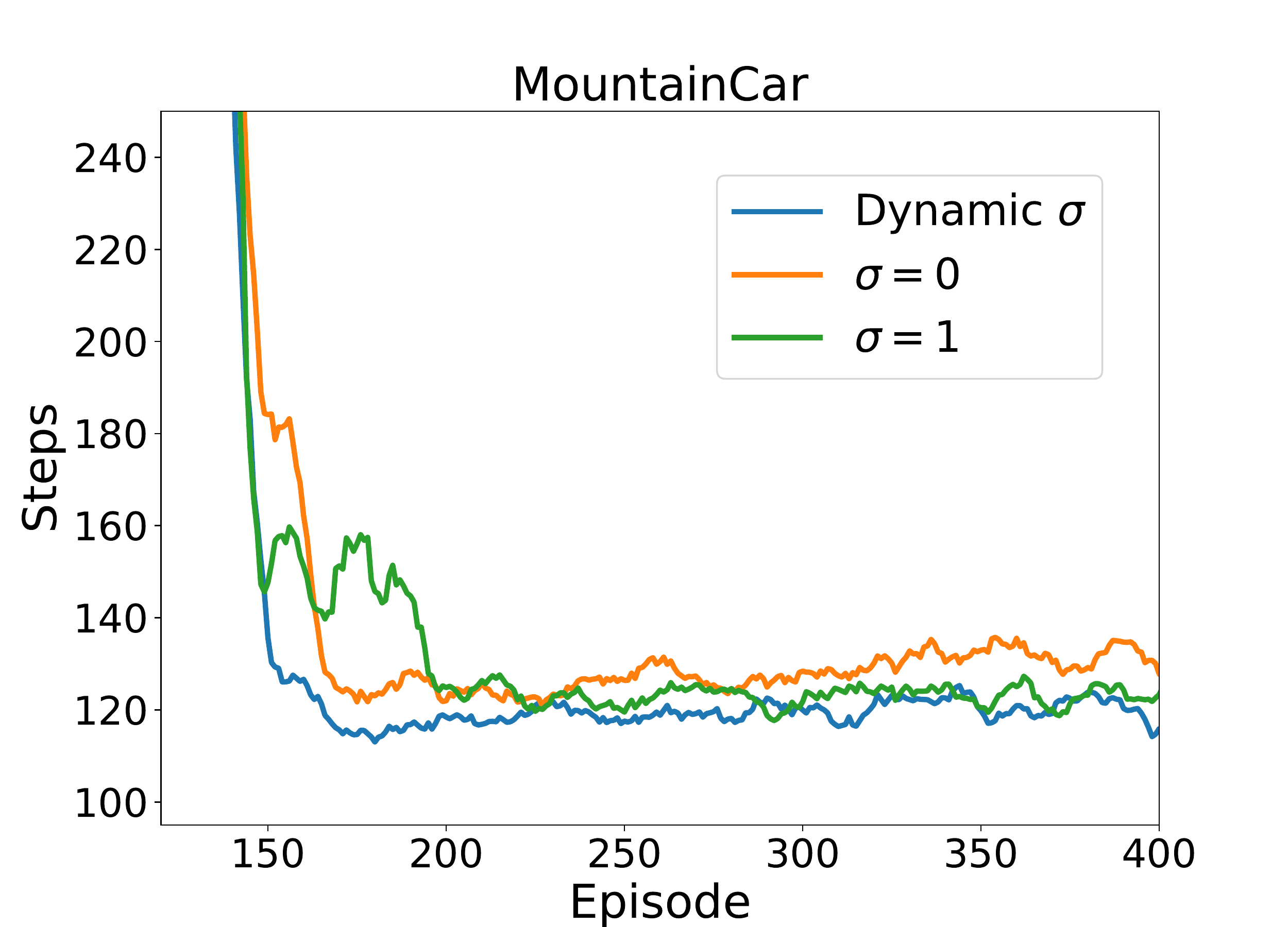}}
    \caption
    {
        Comparison of empirical performance with different step-size: (a) $\alpha_{k}=0.001$, (b) $\alpha_{k}=0.002$.
    }
\end{figure}

 The result in Figure 4 shows that GQ($\sigma,\lambda$) with an intermediate $\sigma$ (between $0$ and $1$) has a better performance than the extreme case ($\sigma=0~\text{and}~1$).
 This experiment further validates that unifying some existing algorithms can create a better algorithm for RL.

\textbf{Variance Comparison}
Now, we investigate and survey the variance of the performance of GQ($\sigma,\lambda$) during training.
All parameters are set as before.
We run the size of feature maps $p$ from $2^{7}$ to $2^{11}$. The outcomes related to different $p$ are very similar, so we only show the results of $p=1024$. 

Result in Figure 5 shows that the performance with the least variance is neither $\sigma=0$ nor $\sigma=1$, but the dynamic $\sigma$ reaches the least variance.

\textbf{Overall Presentation} Now, we give more comprehensive results of the trade-off between full-sampling and pure-expectation.

\begin{table}[H]
\centering
\begin{tabular}{|c|c|c|c|}
\hline
Case&I     & II        & III  \\ \hline
Percentage&\textbf{42.2\%} & \textbf{24.1\%} & 33.7\% \\ \hline
\end{tabular}
\label{tab:mc-all}
\caption{Overall data statistics.}
\end{table}

 We statistic of the number of $\sigma$ happens for the following three case, 
 (I) GQ($\sigma,\lambda$) performs better than both $\sigma=0$ and $\sigma=1$.
(II) GQ($\sigma,\lambda$) performs better than one of  $\sigma=0$ and $\sigma=1$.
(III) GQ($\sigma,\lambda$) performs worse than both $\sigma=0$ and $\sigma=1$.
The setting of $\sigma$ is the same as the previous section, and the total number of $\sigma$ reaches $51$.

\begin{figure}[t]
    \centering
    {\includegraphics[width=6.5cm,height=5.0cm]{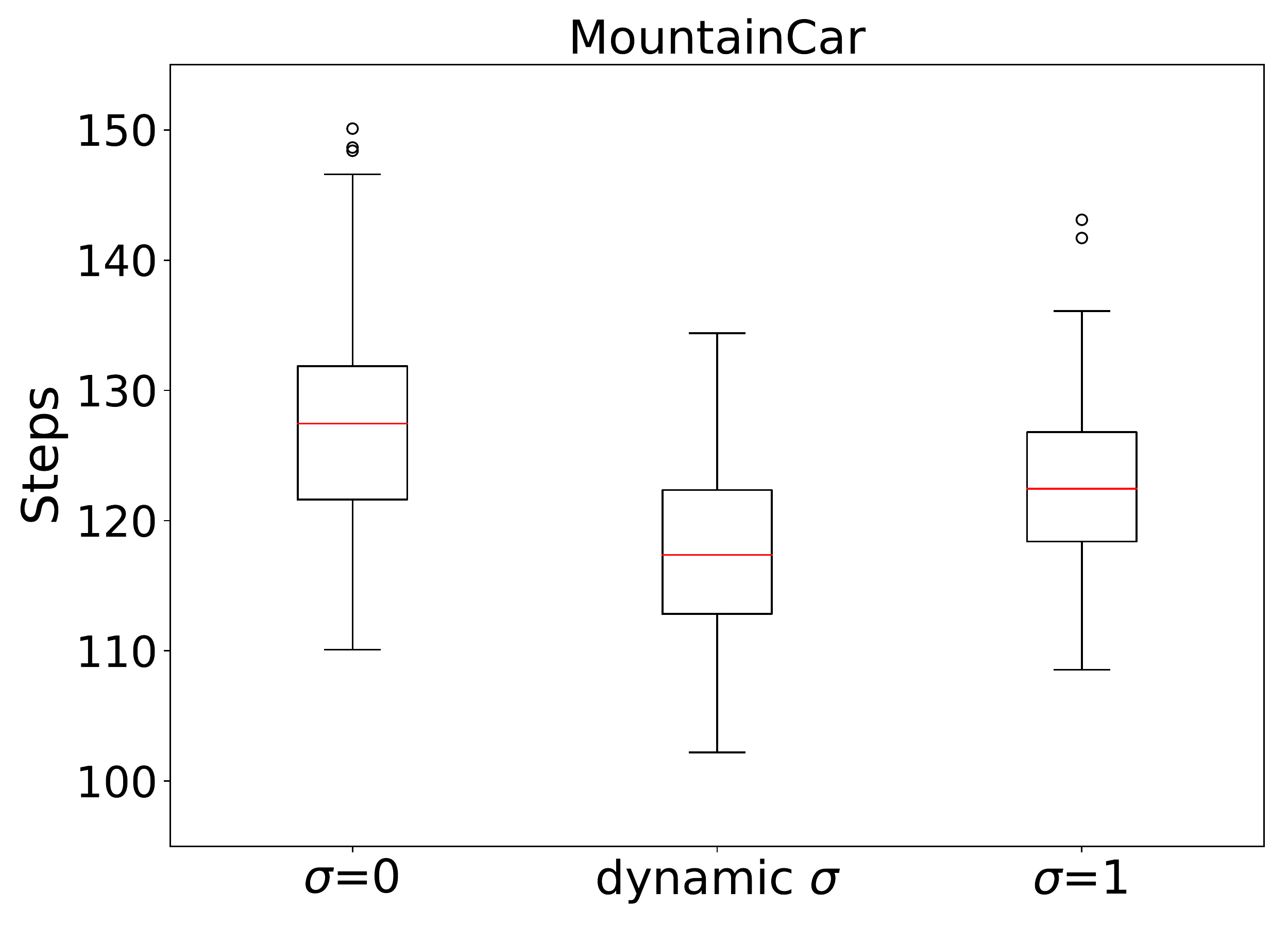}}
    \caption
    {
        Comparison of variance. Figure shows the results of $p=1024$.
    }
\end{figure}

\begin{table}[H]
    \centering
    
    \begin{tabular}{|l|l|c|c|c|}
        \hline
        \multicolumn{2}{|c|}{Case}                & $p=512$             & $p=1024$            & $p=2048$            \\ \hline
        \multirow{3}{*}{$\alpha=0.001$} & I   & \textbf{69.8\%} & 20.4\%& \textbf{55.1\%} \\ \cline{2-5} 
        & II & 14.1\%          & 36.7\%          & 20.4\%          \\ \cline{2-5} 
        &  III         & 16.1\%          & \textbf{42.9\%}          & 24.5\%          \\ \hline 
        \multirow{3}{*}{$\alpha=0.002$} &  I    & \textbf{53.1\%} & 6.1\%  & \textbf{49.0\%} \\ \cline{2-5} 
        & II  & 38.8\%          & 16.3\%          & 18.4\%          \\ \cline{2-5} 
        & III          & 8.1\%           & \textbf{77.6\%}          & 32.6\%          \\ \hline
    \end{tabular}
    \label{tab:mc}
    \caption{Percentage under various parameters.}
\end{table}

Table 1 shows that GQ$(\sigma,\lambda)$ performs better than both $\sigma=0$ and $\sigma=1$ with a more significant number than case II and III.
Table 2 implies the GQ$(\sigma,\lambda)$ reaches the best performance and the advantage is more significant for $p=512 $ and $2048$.

This experiment further illustrates
the trade-off between full-sampling and pure-expectation in RL for control tasks.

\section{Conclusions}

In this paper, we study tabular Q$(\sigma,\lambda)$ with function approximation.
We analyze divergence in Q$(\sigma,\lambda)$ with semi-gradient method.
To address the instability of semi-gradient Q$(\sigma,\lambda)$ algorithm, we propose GQ$(\sigma,\lambda)$ algorithm.
Our theoretical results have given a guarantee of convergence for the combination of full-sampling algorithm and pure-expectation algorithm, which is of great significance in reinforcement learning algorithms with function approximation.
Finally, we conduct extensive experiments on some standard domains to show that GQ$(\sigma,\lambda)$ with an value $\sigma\in(0,1)$ that results in a mixture of the full-sampling with pure-expectation methods, performs better than either extreme $\sigma=0$ or $\sigma=1$.

\bibliographystyle{aaai}
\bibliography{reference}

\clearpage

\onecolumn
\appendix
\section{Appendix A}
\subsection{Proof of Eq.(\ref{Eq:linear_eq})}


Let
$
{P}^{\pi}_{\lambda}=(1-\lambda)\sum_{\ell=0}^{\infty}\gamma^{\ell}\lambda^{\ell}({P}^{\pi})^{\ell+1},\hspace{0.3cm}
\mathcal{R}_{\lambda}^{\pi}=\sum_{\ell=0}^{\infty}\gamma^{\ell}\lambda^{\ell}({P}^{\pi})^{\ell}\mathcal{R}^{\pi}.
$

\textbf{Step1}: If ${A}_{k}={\phi_{k}}\{\sum_{t=k}^{\infty}(\gamma\lambda)^{t-k}(\phi_{t}-\gamma\phi_{t+1})^{T}\}$, then
\[\mathbb{E}[{A}_{k}]={\Phi}^{T}{\Xi}(I-\gamma\lambda{P}^{\mu})^{-1}({I}-\gamma{P}^{\mu}){\Phi}.\]
\begin{eqnarray*}
&&\sum_{t=k}^{\infty}(\gamma\lambda)^{t-k}(\phi_{t}-\gamma\phi_{t+1})^{\top}\\
&=&\lim_{n\rightarrow\infty}[\sum_{t=k}^{n}(\gamma\lambda)^{t-k}(\phi_{t}-\gamma\phi_{t+1})^{\top}]\\
&=&\lim_{n\rightarrow\infty}[{\phi}^{\top}_{k}-\gamma{\phi}^{\top}_{k+1}+\sum_{t=k+1}^{n}(\gamma\lambda)^{t-k}{\phi}^{\top}_{t}-\gamma\sum_{t=k+1}^{n}(\gamma\lambda)^{t-k}{\phi}^{\top}_{t+1}]\\
&=&\lim_{n\rightarrow\infty}[{\phi}^{\top}_{k}+\sum_{t=k}^{n-1}(\gamma\lambda)^{t+1-k}{\phi}^{\top}_{t+1}-\gamma\sum_{t=k}^{n}(\gamma\lambda)^{t-k}{\phi}^{\top}_{t+1}]\\
&=&\lim_{n\rightarrow\infty}[{\phi}^{\top}_{k}+\gamma\lambda\sum_{t=k}^{n-1}(\gamma\lambda)^{t-k}{\phi}^{\top}_{t+1}-\gamma\sum_{t=k}^{n}(\gamma\lambda)^{t-k}{\phi}^{\top}_{t+1}]\\
&=&\lim_{n\rightarrow\infty}[{\phi}^{\top}_{k}-\gamma(1-\lambda)\sum_{t=k}^{n-1}(\gamma\lambda)^{t-k}{\phi}^{\top}_{t+1}-\gamma^{n-k+1}\lambda^{n-k}{\phi}^{\top}_{n+1}]\\
&=&{\phi}^{\top}_{k}-\gamma(1-\lambda)\sum_{t=k}^{\infty}(\gamma\lambda)^{t-k}{\phi}^{\top}_{t+1}.
\end{eqnarray*}

For a stable behavior policy $\mu$, we have $\mathbb{E}[A_{k}]=\mathbb{E}[A_{0}]$.
Thus,
\begin{flalign}
\mathbb{E}[{A}_{k}]
\nonumber
&=\mathbb{E}[{\phi}_{k}{\phi}^{\top}_{k}]-\gamma\mathbb{E}[(1-\lambda)\sum_{t=0}^{\infty}(\gamma\lambda)^{t}\phi^{\top}_{t+1}]\\
\label{A1}
&={\Phi}^{\top}{\Xi}{\Phi}-\gamma{\Phi}^{\top}{\Xi}{{P}}^{\mu}_{\lambda}{\Phi}\\
\nonumber
&={\Phi}^{\top}{\Xi}\{{I}-\gamma(1-\lambda)\sum_{\ell=0}^{\infty}\gamma^{\ell}\lambda^{\ell}({P}^{\mu})^{\ell+1}\}{\Phi}\\
\nonumber
&={\Phi}^{\top}{\Xi}\{I-\gamma(1-\lambda)(I-\gamma\lambda{P}^{\mu})^{-1}{P}^{\mu}\}{\Phi}\\
\nonumber
&={\Phi}^{\top}{\Xi}(I-\gamma\lambda{P}^{\mu})^{-1}({I}-\gamma{P}^{\mu}){\Phi}.
\end{flalign}
By the identity ${P}^{\mu}_{\lambda}=(1-\lambda)\sum_{\ell=0}^{\infty}\gamma^{\ell}\lambda^{\ell}({P}^{\mu})^{\ell+1}$, thus (\ref{A1}) holds.

\textbf{Step2}: If ${A}_{k}={\phi}_{k}\{\sum_{t=k}^{\infty}(\gamma\lambda)^{t-k}({\phi}_{t}-\gamma\mathbb{E}_{\pi}[\phi(S_{t+1},\cdot)])^{\top}\}$, then
\[\mathbb{E}[{A}_{k}]={\Phi}^{\top}{\Xi}(I-\gamma\lambda{P}^{\mu})^{-1}({I}-\gamma{P}^{\pi}){\Phi}.\]

Under Assumption 1and it is similar to the proof in step 1, then we have,
\begin{eqnarray*}
\sum_{t=k}^{\infty}(\gamma\lambda)^{t-k}({\phi}_{t}-\gamma\mathbb{E}_{\pi}[\phi(S_{t+1},\cdot)])^{\top}&=&{\phi}^{T}_{k}-\gamma(1-\lambda)\sum_{t=k}^{\infty}(\gamma\lambda)^{t-k}\mathbb{E}_{\pi}[\phi^{\top}(S_{t+1},\cdot)]\\
&=&{\phi}^{\top}_{k}-\gamma(1-\lambda)\sum_{t=0}^{\infty}(\gamma\lambda)^{t}\mathbb{E}_{\pi}[\phi^{\top}(S_{t+1},\cdot)]
\end{eqnarray*}
Let $\mathcal{E}_{t}=\cup_{i=0}^{t}\{(A_{i},S_{i},R_{i})\},\mathcal{F}_{t}=\mathcal{E}_{t}\cup\{S_{t+1}\}$, then
\begin{eqnarray*}
\mathbb{E}\Big[\sum_{t=0}^{\infty}(\gamma\lambda)^{t}\mathbb{E}_{\pi}[\phi^{\top}(S_{t+1},\cdot)]\Big]
&=&\sum_{t=0}^{\infty}(\gamma\lambda)^{t}\mathbb{E}_{\mathcal{F}_{t}}\Big[\mathbb{E}_{\pi}[\phi^{T}(S_{t+1},\cdot)]\Big]\\
&=&\sum_{t=0}^{\infty}(\gamma\lambda)^{t}\mathbb{E}_{\mathcal{E}_{t}}\Big\{\mathbb{E}_{S_{t+1}\sim \mathcal{P}(\cdot |{A_{t}},{S_{t}})}\Big[\mathbb{E}_{\pi}[\phi^{\top}(S_{t+1},\cdot)]\Big]\Big\}\\
&=&\sum_{t=0}^{\infty}(\gamma\lambda)^{t}\mathbb{E}_{\mathcal{E}_{t}}\Big\{\sum_{a\in\mathcal{A}}\sum_{s\in\mathcal{S}}\mathcal{P}(s|S_{t},A_{t})\pi(s,a)\phi(s,a)\Big\}\\
&=&\sum_{t=0}^{\infty}(\gamma\lambda)^{t}\mathbb{E}_{\mathcal{E}_{t}}\{{{P}^{\pi}}\phi_{t}\}\\
&=&\sum_{t=0}^{\infty}(\gamma\lambda)^{t}\mathbb{E}_{\mathcal{E}_{t-1}}\Big\{\sum_{a\in\mathcal{A}}\sum_{s\in\mathcal{S}}\mathcal{P}(s|S_{t-1},A_{t-1})\mu(s,a){P}^{\pi}\phi_{t}\Big\}\\
&=&\sum_{t=0}^{\infty}(\gamma\lambda)^{t}\mathbb{E}_{\mathcal{E}_{t-1}}\Big\{{P}^{\mu}{P}^{\pi}\phi_{t}\Big\}\\
&=&\sum_{t=0}^{\infty}(\gamma\lambda)^{t}\mathbb{E}_{\mathcal{E}_{t-2}}\Big\{({P}^{\mu})^{2}{P}^{\pi}\phi_{t-2}\Big\}\\
&=&\sum_{t=0}^{\infty}(\gamma\lambda)^{t}\mathbb{E}\Big\{({P}^{\mu})^{t}{P}^{\pi}\phi_{0}\Big\}.
\end{eqnarray*}
\begin{eqnarray*}
\mathbb{E}[A_{k}]&=&\mathbb{E}\Big[{\phi}_{k}\{\sum_{t=k}^{\infty}(\gamma\lambda)^{t-k}({\phi}_{t}-\gamma\mathbb{E}_{\pi}[\phi(S_{t+1},\cdot)])^{\top}\}\Big]\\
&=&\mathbb{E}\Big[{\phi}_{t}\Big\{{\phi}^{\top}_{t}-\gamma(1-\lambda)\sum_{t=0}^{\infty}(\gamma\lambda)^{t}\mathbb{E}_{\pi}[\phi^{\top}(S_{t+1},\cdot)]\Big\}\Big]\\
&=&\mathbb{E}[\phi_{t}\phi^{\top}_{t}]-\gamma(1-\lambda)\sum_{t=0}^{\infty}(\gamma\lambda)^{t}\mathbb{E}\Big\{\phi_{t}^{\top}({P}^{\mu})^{t}{P}^{\pi}\phi_{t}\Big\}\\
&=&\mathbb{E}\Big[\phi_{0}\Big(\sum_{t=0}^{\infty}(\gamma\lambda)^{t}({P}^{\mu})^{t}\Big)(I-\gamma{P}^{\pi})\phi^{\top}_{0}\Big]\\
&=&{\Phi}^{\top}{\Xi}(I-\gamma\lambda{P}^{\mu})^{-1}({I}-\gamma{P}^{\pi}){\Phi}.
\end{eqnarray*}

\textbf{Step3}: Combining the step1 and step2, we have:
\[
A_{\sigma}={\Phi}^{T}\Xi(I-\gamma\lambda{\mathcal{P}}^{\mu})^{-1}((1-\sigma)\gamma{P}^{\pi}+\sigma\gamma{P}^{\mu}-{I}){\Phi}.
\]
The proof of $\mathbb{E}[\hat{b}_{k}]$ is similar to above steps and we omit it.
\[
b_{\sigma}=\Phi\Xi (I-\gamma\lambda{\mathcal{P}}^{\mu})^{-1}(\sigma\mathcal{R}^{\mu}+(1-\sigma)\mathcal{R}^{\pi}).
\]

\textbf{Step4}: 
Taking expectation of Eq.(\ref{Eq:semi-gradient}), by Step 1, Step 2 and Step 3, we have  Eq.(\ref{Eq:linear_eq}).

\clearpage
\section{Appendix B: Proof of Proposition \ref{prop1}}

\textbf{Proposition \ref{prop1}}\emph{
	Let $e_{t}$ be the eligibility traces vector that is generated as $e_{k}=\lambda\gamma e_{k-1}+\phi_{k}$, let
	\begin{flalign}
\nonumber
\Delta_{\sigma,k}&=\gamma\{\sigma\phi_{k+1}+(1-\sigma)\mathbb{E}_{\pi}\phi(S_{k+1},\cdot)\}-\phi_{k},\\
\nonumber
v_{\sigma}(\theta_{k})&=(1-\sigma)\{\mathbb{E}_{\pi}\phi(S_{k+1},\cdot)-\lambda\phi_{k+1}\}e^{\top}_{k}+\sigma(1-\lambda)\phi_{k+1}e^{\top}_{k},
\end{flalign}
then we have 
\begin{flalign}
\nonumber
\theta_{k+1}&=\theta_{k}-\alpha_{k}\frac{1}{2}\nabla_{\theta} \emph{MSPBE}(\theta,\lambda)|_{\theta=\theta_{k}}\\
\nonumber
&=\theta_{k}-\alpha_{k}\mathbb{E}[\Delta_{\sigma,k}e_{k}^{\top}]\omega(\theta_{k})\\
\nonumber
&=\theta_{k}+\alpha_{k}\{\mathbb{E}[\delta_{k}e_{k}]-\gamma\mathbb{E}[v_{\sigma}(\theta_{k})]\omega(\theta_{k})\}.
\end{flalign}
}

\begin{proof}
	Let us calculate MSPBE($\theta,\lambda$) directly,
\begin{flalign}
\nonumber
&-\frac{1}{2}\nabla_{\theta} \text{MSPBE}(\theta,\lambda)|_{\theta=\theta_{k}}\\
\nonumber
=&-\frac{1}{2}\nabla_{\theta}\Big(\mathbb{E}[\delta_{k}e_{k}]^{\top}\mathbb{E}[\phi_{k}\phi^{\top}_{k}]^{-1}\mathbb{E}[\delta_{k}e_{k}]\Big)\\
\nonumber
=&-(\nabla_{\theta}\mathbb{E}[\delta_{k}e_{k}]^{\top})\mathbb{E}[\phi_{k}\phi^{\top}_{k}]^{-1}\mathbb{E}[\delta_{k}e_{k}]\\
\nonumber
=&-\mathbb{E}\Big[\underbrace{\Big(\gamma(\sigma\phi_{k+1}+(1-\sigma)\mathbb{E}_{\pi}\phi(S_{k+1},\cdot))-\phi_{k}\Big)}_{\Delta_{\sigma,k}}e^{\top}_{k}\Big]\omega(\theta_k)\\
\nonumber
=&-\mathbb{E}\Big[\gamma(\sigma\phi_{k+1}+(1-\sigma)\mathbb{E}_{\pi}\phi(S_{k+1},\cdot))e^{\top}_{k}-\phi_{k}e^{\top}_{k}\Big]\omega(\theta_k)\\
\nonumber
=&\mathbb{E}\Big[\phi_{k}\phi^{T}_{k}+\phi_{k+1}\gamma\lambda e_{k}^{\top}-\gamma(\sigma\phi_{k+1}+(1-\sigma)\mathbb{E}_{\pi}\phi(S_{k+1},\cdot))e^{\top}_{k}\Big]\omega(\theta_k)
\\
\label{prop1-1}
=&\mathbb{E}[\delta_{k}e_{k}]-\gamma\mathbb{E}\Big[\sigma(1-\lambda)\phi_{k+1}e^{\top}_{k}+(1-\sigma)\Big(\mathbb{E}_{\pi}\phi(S_{k+1},\cdot)-\lambda\phi_{k+1}\Big)e^{\top}_{k}\Big]\omega(\theta).
\end{flalign}
Taking Eq.(\ref{prop1-1}) into Eq.(\ref{theata_update}), then we have Eq.(\ref{theta_uptate_2}).
\end{proof}

\section{Appendix C}

In fact, $\Delta_{\sigma,k}=(1-\sigma)[\gamma\mathbb{E}_{\pi}\phi(S_{k+1},\cdot)-\phi(S_{k},A_{t})]+\sigma[\gamma\phi(S_{k+1},A_{k+1})-\phi(S_{k},A_{k})]$
\begin{eqnarray*}
\mathbb{E}[\hat{A_{k}}]&=&\mathbb{E}[\Delta_{\sigma,k}e_{k}]\\
&=&\mathbb{E}[\Delta_{\sigma,k}(\sum^{k}_{i=0}(\lambda\gamma)^{k-i}\phi(S_{i},A_{i}))]\\
&{=}&\mathbb{E}[\phi(S_{k},A_{k})\Delta_{\sigma,k}+\lambda\gamma\phi(S_{k-1},A_{k-1})\Delta_{\sigma,k}+\sum^{\infty}_{t=k+1}(\lambda\gamma)^{t-k+1}\phi(S_{k},A_{k})\Delta_{\sigma,t+1}]\\
&=&\mathbb{E}[\phi(S_{k},a_{k})\Delta_{\sigma,k}+\lambda\gamma\phi(S_{k},A_{k})\Delta_{\sigma,k+1}+\sum^{\infty}_{t=k+1}(\lambda\gamma)^{t-k+1}\phi(S_{k},A_{k})\Delta_{\sigma,t+1}]\\
&=&\mathbb{E}[\phi(S_{k},A_{k})\Delta_{\sigma,k}+\sum^{\infty}_{t=k}(\lambda\gamma)^{t-k+1}\phi(S_{k},A_{k})\Delta_{\sigma,t+1}]\\
&=&\mathbb{E}[\phi(S_{k},A_{k})\Delta_{\sigma,k}+\sum^{\infty}_{t=k+1}(\lambda\gamma)^{t-k}\phi(S_{k},A_{k})\Delta_{\sigma,t}]\\
&=&\mathbb{E}[\sum^{\infty}_{t=k}(\lambda\gamma)^{t-k}\phi(S_{k},A_{k})\Delta_{\sigma,t}]\\
&=&A_{\sigma}
\end{eqnarray*}

\section{Appendix D: Lemma \ref{Borkar-two--timescale}}

\begin{lemma}[\cite{borkar1997stochastic}]
	\label{Borkar-two--timescale}
	For the stochastic recursion of $x_{n},y_{n}$ given by
	\begin{flalign}
	\label{Borkar97-lemma-x}
	x_{n+1}&=x_{n}+a_{n}[g(x_{n},y_{n})+M^{(1)}_{n+1}],\\
	\label{Borkar97-lemma-y}
	y_{n+1}&=y_{n}+b_{n}[h(x_{n},y_{n})+M^{(2)}_{n+1}],n\in\mathbb{N}
	\end{flalign}
	if the following assumptions are satisfied:
	\begin{itemize}
		\item (A1)Step-sizes $\{a_{n}\},\{b_{n}\}$ are positive, satisfying
		\[
		\sum_{n} a_{n}=\sum_{n} b_{n}=\infty, \sum_{n} a^{2}_{n}+b^{2}_{n}<\infty,\dfrac{b_{n}}{a_{n}}\rightarrow 0~~ \emph{as} ~~n\rightarrow \infty.  
		\]
		\item(A2) The map $g:\mathbb{R}^{d+k} \rightarrow \mathbb{R}^{d},h:\mathbb{R}^{d+k} \rightarrow \mathbb{R}^{k}$ are Lipschitz.
		\item(A3) The sequence $\{M^{(1)}_{n+1}\}_{n\in\mathbb{N}},\{M^{(2)}_{n+1}\}_{n\in\mathbb{N}}$ are martingale difference sequences w.r.t. the increasing $\sigma$-fields
		$
		\mathcal{F}_{n} \overset{\text{def}}= \sigma(x_{m},y_{m},M^{(1)}_{m},M^{(2)}_{m}, m \leq n), n\in\mathbb{N},
		$
		satisfying \[\mathbb{E}[M^{(i)}_{n+1}|\mathcal{F}_{n}]=0,i=1,2, n\in\mathbb{N}.\]
		Furthermore, $\{M^{(i)}_{n+1}\}_{n\in\mathbb{ N}}, i=1,2$, are square-integrable with
		\[
		\mathbb{E}[\|M^{(i)}_{n+1}\|^{2}|\mathcal{F}_{n}]\leq K(1+\|x_{n}\|^{2}+\|y_{n}\|^{2}),
		\]
		for some constant $K > 0$.
		\item(A4)For each $x\in\mathbb{R}^{d}$,the o.d.e.
		\[
		\dot y(t) = h(x, y(t))
		\]
		has a  global asymptotically stable equilibrium $\Omega(x)$ such that:$\Omega(x):\mathbb{R}^{d}\rightarrow\mathbb{R}^{k}$ is Lipschitz.
		\item(A5)The o.d.e.\[\dot x(t) =g(x(t),\Omega(x(t)))\] has a  global asymptotically stable equilibrium $x^{*}$.
	\end{itemize}
	Then, the iterates (\ref{Borkar97-lemma-x}), (\ref{Borkar97-lemma-y}) converge to $(x^{*},\Omega(x^{*}))$ a.s. 
	on the set $Q \overset{\emph{def}}=\{\sup_{n} x_{n}<\infty, \sup_{n} y_{n}<\infty\}.$
\end{lemma}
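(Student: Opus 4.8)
The plan is to prove the lemma by the \emph{ODE method} for two-timescale stochastic approximation: first control the martingale noise almost surely, then show the inner iterate $\{y_n\}$ tracks the equilibrium manifold $\Omega(\cdot)$ supplied by (A4), and finally show the outer iterate $\{x_n\}$ follows the averaged ODE of (A5). All estimates are carried out on the event $Q=\{\sup_n\|x_n\|<\infty,\ \sup_n\|y_n\|<\infty\}$, on which (A3) provides uniformly bounded conditional second moments of the noise.

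First I would dispose of the noise. Set $\zeta^{(1)}_n=\sum_{k<n}a_kM^{(1)}_{k+1}$ and $\zeta^{(2)}_n=\sum_{k<n}b_kM^{(2)}_{k+1}$; these are martingales with respect to $\{\mathcal{F}_n\}$ by (A3). On $Q$ the iterates are bounded, so (A3) bounds the conditional variances, and the predictable quadratic variations are dominated by $K\sum_k a_k^2\,(1+\sup_m\|x_m\|^2+\sup_m\|y_m\|^2)$ and its $b_k$-analogue, both finite because $\sum_k a_k^2<\infty$ and $\sum_k b_k^2<\infty$ by (A1). By the martingale convergence theorem $\zeta^{(1)}_n$ and $\zeta^{(2)}_n$ converge a.s., so the tail noise contributions are asymptotically negligible and each recursion may be treated as a noisy Euler scheme for its driving vector field.

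Next I would establish the tracking property $\|y_n-\Omega(x_n)\|\to 0$ a.s. The timescale separation imposed by (A1) permits a quasi-static analysis in which $\{x_n\}$ is treated as frozen while $\{y_n\}$ equilibrates: introducing the rescaled time $t(n)=\sum_{k<n}b_k$ and the piecewise-linear interpolation $\bar y(\cdot)$ of $\{y_n\}$, the vanishing of the noise together with the Lipschitz bound (A2) on $h$ shows that $\bar y$ is an asymptotic pseudotrajectory of the family of flows $\dot y=h(x,y)$ indexed by the slowly varying $x$. Since (A4) makes $\Omega(x)$ the globally asymptotically stable equilibrium of each such flow, a Gronwall/lock-in estimate — using the Lipschitz continuity of $\Omega$ to absorb the drift contributed by the motion of $x_n$ — forces the interpolation into any neighborhood of the manifold $\{(x,\Omega(x))\}$ and keeps it there, giving $\|y_n-\Omega(x_n)\|\to 0$.

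Finally I would substitute this into the $x$-recursion. Writing $g(x_n,y_n)=g(x_n,\Omega(x_n))+\big(g(x_n,y_n)-g(x_n,\Omega(x_n))\big)$ and using the Lipschitz property of $g$ from (A2) with the tracking estimate, the correction term is $o(1)$; combined with the already-controlled noise, the interpolation of $\{x_n\}$ on its own time scale $\sum_{k<n}a_k$ is an asymptotic pseudotrajectory of $\dot x=g(x,\Omega(x))$. By (A5) this ODE has the globally asymptotically stable equilibrium $x^*$, so $x_n\to x^*$ a.s., and continuity of $\Omega$ then yields $y_n\to\Omega(x^*)$, i.e.\ $(x_n,y_n)\to(x^*,\Omega(x^*))$ on $Q$. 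The main obstacle is the tracking step: one must reconcile the contraction of $y$ toward $\Omega(x)$ for \emph{frozen} $x$ with the fact that $x_n$ never stops moving, so a single contraction or fixed-point argument does not suffice and one needs the asymptotic-pseudotrajectory framework or Borkar's lock-in argument, with the Lipschitz regularity of $\Omega$ doing the essential work of bounding the coupling error.
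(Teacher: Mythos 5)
The paper itself offers no proof of this lemma: it is imported (as the citation indicates) from Borkar~\shortcite{borkar1997stochastic}, so the benchmark is Borkar's original argument. Your sketch reproduces its three-part architecture faithfully: control the noise via square-summable steps and the martingale convergence theorem on the boundedness event $Q$ (modulo a routine localization to make the conditional-variance bound usable, since $\sup_n\|x_n\|$ is not adapted), prove a tracking lemma pinning one iterate to the equilibrium manifold of the other, and then show the remaining iterate is an asymptotic pseudotrajectory of the averaged ODE, converging to its global attractor. You also correctly identify that the crux is the lock-in/Gronwall step rather than a naive contraction, with the Lipschitz regularity of the equilibrium map absorbing the coupling error.

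There is, however, a genuine directional error in your tracking step, and under (A1) as stated it is fatal. (A1) says $b_n/a_n\rightarrow 0$: the $y$-recursion has the \emph{smaller} stepsize, so $y$ is the slow iterate and $x$ the fast one. Your quasi-static analysis freezes $x$ and lets $y$ equilibrate toward $\Omega(x)$, which requires the opposite ordering $a_n/b_n\rightarrow 0$. Concretely, for the interpolation of $\{y_n\}$ to contract toward $\Omega(x)$ you must run the flow $\dot y = h(x,y)$ over windows of fixed duration $T>0$ in the $y$-clock, i.e., windows with $\sum_k b_k \approx T$; but on such windows the accumulated drift of $x$ is $\sum_k a_k=\sum_k (a_k/b_k)\,b_k\rightarrow\infty$ because $a_k/b_k\rightarrow\infty$, so $x$ is anything but slowly varying there, and the Gronwall bound on the coupling error diverges rather than vanishes. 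In Borkar's actual theorem with $b_n=o(a_n)$ it is the \emph{fast} iterate $x_n$ that tracks the equilibrium map $\lambda(y)$ of $\dot x=g(x,y)$ for frozen $y$, and the slow iterate then follows the averaged ODE $\dot y=h(\lambda(y),y)$, converging to its equilibrium $y^{*}$, whence $(x_n,y_n)\rightarrow(\lambda(y^{*}),y^{*})$. To be fair, your sketch matches the (A4)/(A5) pairing exactly as printed in the paper; but that printed pairing is itself inconsistent with (A1) --- a transcription slip relative to Borkar's original in which the ODE roles of the two iterates are swapped. If (A1) is corrected to $a_n/b_n\rightarrow 0$, your outline becomes the standard proof; as written against the stated hypotheses, the tracking step cannot be carried out.
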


\section{Appendix E}

\subsection{Baird Star}
\label{app-ex-Baird Star}

Baird's Star a well known example for divergence in off-policy TD learning for all step-sizes.
The Baird’s Star considers the episodic MDP with 
seven-state $\mathcal{S}=\{\mathtt{s}_1,\cdots,\mathtt{s}_7\}$ and two-action 
$\mathcal{A}$= \{$\mathtt{dashed}$ action, $\mathtt{solid}$ action\}.
The $\mathtt{dashed}$ action takes the system
to one of the first six states with equal probability, 
whereas the $\mathtt{solid}$ action takes the
system to the state $\mathtt{s}_7$. 
The policy $\mu$ and $\pi$ select the $\mathtt{dashed}$ and $\mathtt{solid}$ actions
with probabilities \[\mu(\mathtt{dashed}|\cdot)=\frac{6}{7}, \mu(\mathtt{solid}|\cdot)=\frac{1}{7}, \pi(\mathtt{solid}|\cdot)=1,\]
which implies that the target policy always takes the $\mathtt{solid}$ action.
The reward is zero on all transitions. The discount rate $\gamma= 0.99.$
Features are chosen as \[\phi(\mathtt{s}_i) = 2\epsilon_i + (0, 0,0, 0, 0 ,0 ,0, 1)^{T},\] where $i$-th component is $\epsilon_i$ is 1, others are all 0.

We used $\theta_{0} = (1, 1, 1 ,1, 1 ,1, 10, 1)^{T}$ as initial parameter vector for the methods that allow specifying a start estimate, TD-learning is known to diverge for this initialization of the parameter-vector \cite{dann2014policy,sutton2018reinforcement}.

\subsection{Boyan Chain}

\begin{figure}[htbp]
	\centering
	{\includegraphics[scale=0.6]{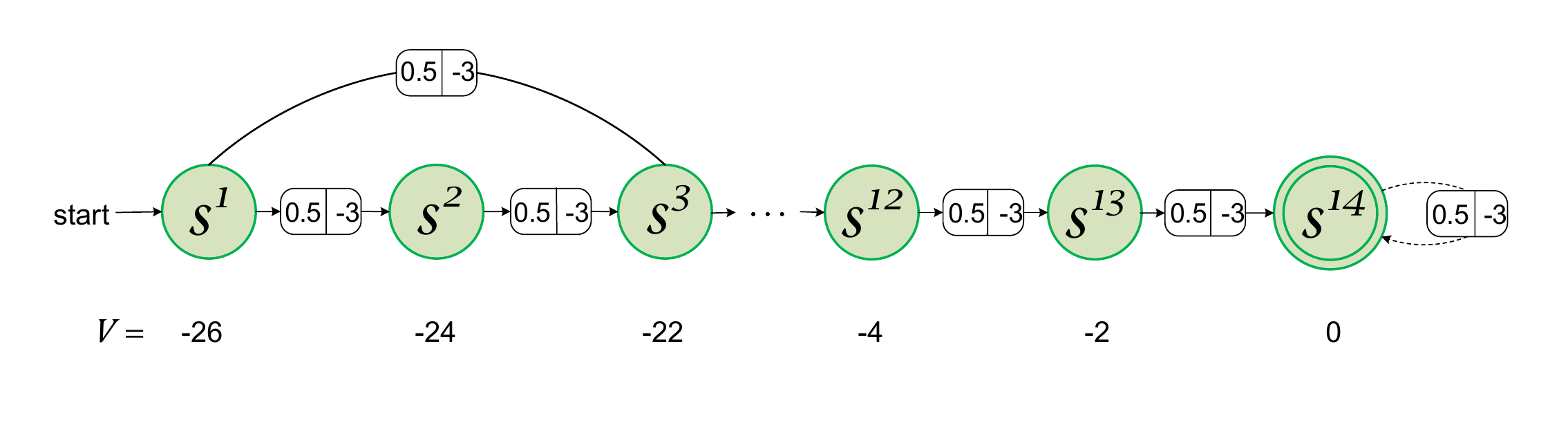}}	
	\caption
	{
		The dynamics of Boyan Chain 
	}
\end{figure}
The second benchmark MDP is the classic chain example from \cite{boyan2002technical} which considers a chain of $14$ states $\mathcal{S} = \{\mathtt{s}_1, \cdots, \mathtt{s}_{14}\}$ and one action. 
Each transition from state $\mathtt{s}_i$ results in state $\mathtt{s}_{i+1}$ or $\mathtt{s}_{i+2}$ with equal probability and a reward of $-3$. 
If the agent is in the second last state $\mathtt{s}_{13}$, it always proceeds to the last state with reward  $-2$ and subsequently stays in this state forever with zero reward.

The true value function, which is linearly decreasing from $\mathtt{s}_1$ to $\mathtt{s}_{14}$, can be represented perfectly.
\begin{flalign}
\nonumber
P_{\mathtt{14}\times\mathtt{14}} = \begin{pmatrix}
0 & \frac{1}{2} & \frac{1}{2} & 0&0 & \cdots&0 & 0  \\
0 & 0 & \frac{1}{2} & \frac{1}{2}&0 & \cdots&0 & 0  \\
0 & 0 & 0 & \frac{1}{2}&\frac{1}{2} & \cdots&0 & 0  \\
 \vdots&&&&&&&  \vdots\\
0 & 0 &0 & 0 & 0& \cdots&\frac{1}{2} & \frac{1}{2}   \\
0 & 0 & 0 & 0 & 0&\cdots&0 & 1  \\
0 & 0 & 0 & 0 & 0&\cdots&0 & 1   
\end{pmatrix},~~~
R_{\mathtt{14}\times\mathtt{1}} = \begin{pmatrix}
-3  \\
-3  \\
-3   \\
\vdots\\
-3 \\
-2 \\
0  
\end{pmatrix} ~
\end{flalign}
By Bellman equation, we have 
\[
v=(I-\gamma P)^{-1}R~~\rightarrow
(-26 ,-24  -22,\cdots,   -4, -2 ,0 ) ,~~\text{as} ~\gamma\rightarrow~1.
\]

In this paper, we chose a discount factor of $\gamma= 0.99$ and four-dimensional feature description with triangular-shaped basis functions covering the state space (Figure 7). 

\begin{figure}[h]
	\label{fig:app-boyan-chain-feature}
	\vskip 0.2in
	\begin{center}
		\centerline{\includegraphics[width=7cm]{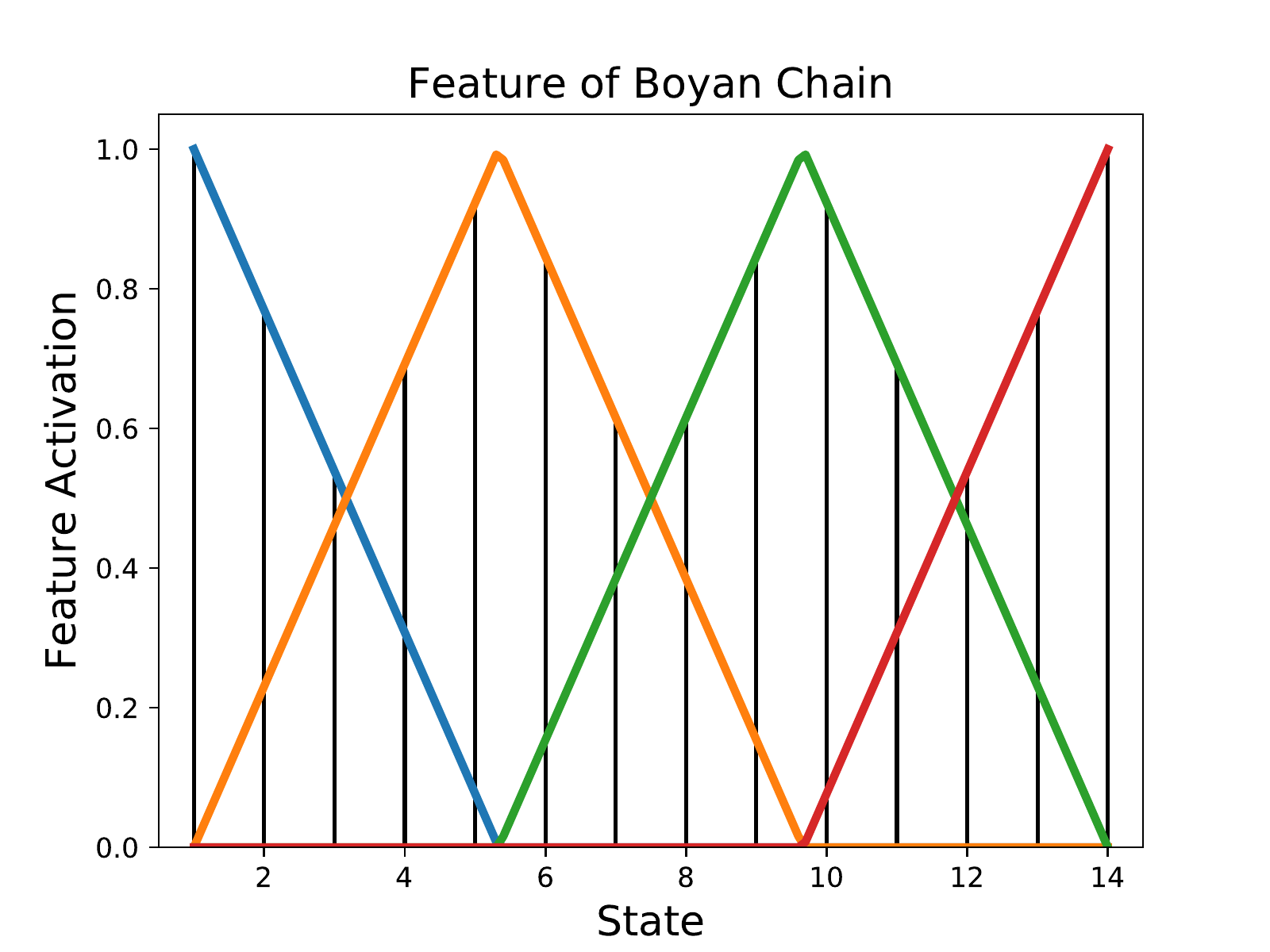}}
		\caption{Feature Activation for the Boyan chain benchmark. The state space is densely covered with triangle-shaped basis functions. The figure here refers to \cite{dann2014policy}.}
		\label{icml-historical}
	\end{center}
	\vskip -0.2in
\end{figure}

\end{document}